\newcommand{\argmin}{\mathrm{argmin}}
\newcommand{\argmax}{\mathrm{argmax}}
\newcommand{\poly}{\mathrm{poly}}
\newcommand{\unif}{\mathrm{unif}}
\newcommand{\mat}[1]{#1}
\newcommand{\vect}[1]{#1}
\newcommand{\norm}[1]{\left\|#1\right\|}
\newcommand{\normop}[1]{\left\|#1\right\|_{\mathrm{op}}}
\newcommand{\simplex}{\triangle}
\newcommand{\abs}[1]{\left|#1\right|}
\newcommand{\expect}{\mathbb{E}}
\newcommand{\prob}{\mathbb{P}}
\newcommand{\qclass}{\mathcal{Q}}
\newcommand{\states}{\mathcal{S}}
\newcommand{\trans}{P}
\newcommand{\actions}{\mathcal{A}}
\newtheorem{thm}{Theorem}[section]
\newtheorem{lem}{Lemma}[section]
\newtheorem{asmp}{Assumption}[section]
\newtheorem{defn}{Definition}[section]
\newtheorem{oracle}{Oracle}[section]
\providecommand{\poly}{\mathrm{poly}}
\newcommand{\levels}{[H]}
\newcommand{\covset}{\mathcal{M}}
\newcommand{\epsstat}{\epsilon_s}
\newcommand{\epstest}{\epsilon_t}
\newcommand{\dist}{\mathcal{D}}
\newcommand{\gap}{\mathrm{gap}}
\newcommand{\algo}{\textsf{Difference Maximization $Q$-learning}}
\newcommand{\epsn}{\epsilon_N}
\newcommand{\ridge}{\lambda_{\mathrm{ridge}}}
\newcommand{\epsr}{\lambda_r}
\newcommand{\pifinal}{\hat{\pi}}
\newcommand{\true}{$\mathrm{True}$}
\newcommand{\false}{$\mathrm{False}$}
\newcommand{\mdp}{\mathcal{M}}
\newenvironment{itemize*}%
{\begin{itemize}[leftmargin=*,topsep=0pt]%
		\setlength{\itemsep}{0pt}%
		\setlength{\parskip}{0pt}}%
	{\end{itemize}}
\newenvironment{enumerate*}%
{\begin{enumerate}[leftmargin=*,topsep=0pt]%
		\setlength{\itemsep}{0pt}%
		\setlength{\parskip}{0pt}}%
	{\end{enumerate}}
\title{
Provably Efficient $Q$-learning with Function Approximation via Distribution Shift Error Checking Oracle
}
\author{
	Simon S. Du\thanks{Institute for Advanced Study, Email: \texttt{ssdu@ias.edu}
	}
	\and
	Yuping Luo\thanks{Princeton University, Email: \texttt{yupingl@cs.princeton.edu}
	}
	\and
	Ruosong Wang\thanks{Carnegie Mellon University, Email: \texttt{ruosongw@andrew.cmu.edu}
	}
	\and
	Hanrui Zhang\thanks{Duke University,
		Email: \texttt{hrzhang@cs.duke.edu}
	}
}
\begin{document}

\maketitle

\begin{abstract}
$Q$-learning with function approximation is one of the most popular methods in reinforcement learning. Though the idea of using function approximation was proposed at least $60$ years ago~\cite{samuel1959some}, even in the simplest setup, i.e, approximating $Q$-functions with linear functions, it is still an open problem how to design a provably efficient algorithm that learns a near-optimal policy. The key challenges are how to efficiently explore the state space and how to decide when to stop exploring \emph{in conjunction with} the function approximation scheme.

The current paper presents a provably efficient algorithm for $Q$-learning with linear function approximation. Under certain regularity assumptions, our algorithm, \textsf{Difference Maximization $Q$-learning}~(DMQ), combined with linear function approximation, returns a near-optimal policy using polynomial number of trajectories. Our algorithm introduces a new notion, the Distribution Shift Error Checking (DSEC) oracle. This oracle tests whether there exists a function in the function class that predicts well on a distribution $\mathcal{D}_1$, but predicts poorly on another distribution $\mathcal{D}_2$, where $\mathcal{D}_1$ and $\mathcal{D}_2$ are distributions over states induced by two different exploration policies. For the linear function class, this oracle is equivalent to solving a top eigenvalue problem. We believe our algorithmic insights, especially the DSEC oracle, are also useful in designing and analyzing reinforcement learning algorithms with general function approximation.
\end{abstract}

\section{Introduction}
\label{sec:intro}
%What is value-based learning, Q learning and it is very popular
$Q$-learning is a foundational method in reinforcement learning~\cite{watkins1992q} and has been successfully applied in various domains.
$Q$-learning aims at learning the optimal state-action value function ($Q$-function).
 Once we have learned the $Q$-function, at every state, we can just greedily choose the action with the largest $Q$ value, which is guaranteed to be an optimal policy.

%Q learning with tabular 
Although being a fundamental method, theoretically, we only have a good understanding of $Q$-learning in the tabular setting.
Strehl et al.~\cite{strehl2006pac} and Jin et al.~\cite{jin2018q} showed with proper exploration techniques, one can obtain a near-optimal $Q$-function (and so a near-optimal policy) using polynomial number of trajectories, in terms of number of states, actions and planning horizon.
While these analyses provide valuable insights, they are of limited practical importance because the number of states in most applications is enormous.
Even worse, it has been proved that in the tabular setting, the number of trajectories needed to learn a near-optimal policy scales at least linearly with the number of states~\cite{jaksch2010near}.

%Function approximation
To resolve this problem, we need reinforcement learning methods that generalize, which, for $Q$-learning methods, is to constrain $Q$-function to a pre-specified function class, e.g., linear functions or neural networks.
The basic assumption of this function approximation scheme is that the true $Q$-function lies in the function class.
A natural problem is:
\begin{center}
\emph{Can we design provably efficient $Q$-learning algorithms with function approximation?}
\end{center}
Indeed, this is one of the major open problems in reinforcement learning \cite{sutton1999open}.
% linear function approximation is the dominate
The idea of using function approximation was proposed at least $60$ years ago~\cite{samuel1959some}, where linear functions are used to approximate the value functions in playing checkers.
However, even in the most basic setting, $Q$-learning with linear function approximation, there is no provably efficient algorithm in the general stochastic setting.

The key challenges are how to 1) efficiently explore the state space to learn a good predictor that generalizes across states and 2) decide when to stop exploring.
In order to deal with these challenges, we need to exploit the fact that the true $Q$-function belongs to a pre-specified function class.

%Our contributions
\paragraph{Our Contributions}
Our main theoretical contribution is a provably efficient algorithm for $Q$-learning with linear function approximation in the episodic Markov decision process (MDP) setting.
\begin{thm}[Main Theorem (informal)]
\label{thm:main_informal}
Suppose the $Q$-function is linear.
Then under certain regularity assumptions, Algorithm~\ref{algo:general_main}, \algo~(DMQ) returns an $\epsilon$-suboptimal policy $\pi$  using $\poly(1/\epsilon)$ number of trajectories.
\end{thm}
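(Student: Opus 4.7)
The plan is to interleave fitted $Q$-iteration with calls to the DSEC oracle so that the learned $\widehat{Q}_h$ at every level $h\in\levels$ predicts accurately not only on the roll-in distribution used for training but also on the state--action distribution visited by the resulting greedy policy. First I would set up a backward induction over levels $h=H, H-1,\ldots,1$: assuming $\widehat{Q}_{h+1},\ldots,\widehat{Q}_H$ already induce a near-optimal greedy policy from level $h+1$ onwards, regress onto a linear $\widehat{Q}_h$ using trajectories collected by a roll-in distribution $\mu_h$. Standard least-squares concentration in $d$ dimensions with $\poly(d,H,1/\epsilon)$ trajectories then yields an estimate whose mean-squared Bellman error is small under $\mu_h$.

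The role of the DSEC oracle is to certify that this in-distribution accuracy transfers to the distribution $\nu_h$ induced by the greedy policy of $\widehat{Q}$. For linear features $\phi$, the oracle reduces to checking whether there exists a direction $w$ with $\E_{\mu_h}[\langle w,\phi\rangle^2]\le\epsilon$ yet $\E_{\nu_h}[\langle w,\phi\rangle^2]\ge C\epsilon$, which can be decided in closed form via a generalized eigenvalue computation, matching the ``top eigenvalue'' claim in the abstract. If the oracle returns ``no,'' the difference of any two linear predictors that is small under $\mu_h$ is also small under $\nu_h$, so the MSE of $\widehat{Q}_h$ on the roll-out distribution is controlled. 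If ``yes,'' I would mix trajectories drawn from the witnessing greedy policy into $\mu_h$, re-fit $\widehat{Q}_h$, and recurse.

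The main obstacle is bounding the number of times the DSEC oracle can return ``yes,'' since this is what ultimately controls total trajectory count. I would address this with an elliptical-potential style argument: whenever DSEC triggers at level $h$, the feature second-moment matrix $\Sigma_h=\E_{\mu_h}[\phi\phi^{\top}]$ must gain a constant factor of mass along the witness direction, so $\log\det(\Sigma_h+\lambda I)$ increases by $\Omega(1)$. Since this quantity is bounded above by $d\log(1+\|\phi\|_{\max}^2/\lambda)$, the oracle triggers at most $O(d\log(1/\epsilon))$ times per level, giving $O(Hd\log(1/\epsilon))$ outer iterations in total. The subtle point here is that the covariance growth has to be shown for the updated roll-in distribution after the new policy is \emph{mixed in}, not for the greedy policy's distribution in isolation; standard matrix-determinant-lemma manipulations handle this.

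Finally, I would convert the level-by-level transfer guarantees into an end-to-end suboptimality bound via a performance-difference (simulation) lemma: the suboptimality of the greedy policy of $\widehat{Q}$ telescopes into a sum over $h\in\levels$ of Bellman residuals evaluated on the visitation distribution of that very policy, and each of these is exactly what the DSEC certificate at level $h$ bounds. Combining the per-iteration regression error, the $O(Hd\log(1/\epsilon))$ bound on DSEC triggers, and the telescoping yields the promised $\poly(1/\epsilon)$ trajectory complexity.
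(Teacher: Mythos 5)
Your two central technical ingredients do match the paper's: for the linear class the DSEC oracle reduces to a (generalized) top-eigenvalue computation, and the number of \true{} returns per level is bounded by a log-determinant/elliptical-potential argument showing a constant multiplicative growth of $\det(\lambda I+\sum_{M\in\mathcal M}M)$ each time the oracle triggers (this is exactly Lemma~\ref{lem:mat_potential}, combined with covariance concentration to pass between empirical and population second-moment matrices). So the exploration-budget half of your plan is essentially the paper's.

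The genuine gap is in how you get per-level accuracy and then convert it to suboptimality. Your regression targets are not noisy samples of $Q^*(s,a)$: they are on-the-go returns of the \emph{current} greedy roll-out, so each label carries a bias $b_i \approx V^{\pi_{\mathrm{roll-out}}}(s_{h+1})-V^*(s_{h+1})$. ``Standard least-squares concentration'' does not dispose of this; you need (i) a reason the roll-out policy is near-optimal so that $\E[\lvert b_i\rvert]$ is small, and (ii) control of the \emph{second} moment $\E[b_i^2]$, since that is what enters the regression bound (Lemma~\ref{lem:lr_with_bias}). The paper gets (i) from the minimum-gap assumption (Definition~\ref{defn:gap}): once the MSE at later levels is below $\gamma^2\epsilon/(4H)$, Markov's inequality shows the greedy action coincides with $\pi^*$ except with probability $\epsilon/H$ per level, so by Lemma~\ref{lem:policy_correctness} the roll-out value bias is at most $\epsilon$ \emph{independently of how the MSE would otherwise compound}; and it gets (ii) from Assumption~\ref{asmp:var_bound} (bounded coefficient of variation), which upgrades $\E[\lvert b_i\rvert]\le\epsilon$ to $\E[b_i^2]\le C\epsilon^2$. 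Your proposal invokes neither, and your gap-free alternative --- backward induction plus a performance-difference telescoping of Bellman residuals --- runs into exactly the compounding you are implicitly assuming away: without the gap, the value error of the roll-out at level $h+1$ scales like $H\sqrt{\mathrm{MSE}_{>h}}$, so $\E[b_i^2]\lesssim CH^2\,\mathrm{MSE}_{>h}$ and the recursion $\mathrm{MSE}_h\lesssim CH^2\,\mathrm{MSE}_{>h}+\text{(statistical error)}$ grows geometrically in $H$, destroying the claimed $\poly(1/\epsilon)$ bound. This is also why the paper's final step is not a simulation lemma but the gap-based argument (small MSE $\Rightarrow$ per-level agreement with $\pi^*$ $\Rightarrow$ Lemma~\ref{lem:policy_correctness}), and why the theorem requires $\epsilon\le\poly(\gamma,1/C,\ldots)$. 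To repair your plan you would either have to import these two assumptions and use them where the bias enters the regression, or supply some other mechanism (e.g.\ a concentrability/transfer condition) that prevents the level-by-level error amplification.
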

Our algorithm works for episodic MDPs with general stochastic transitions.
In contrast, previous algorithms only work for deterministic systems, or rely on strong assumptions, e.g., a sufficiently good exploration policy is given.
See Section~\ref{sec:rel} for more discussion.
Our main assumption is that the $Q$-function is linear.
Note this is somehow a necessary assumption because otherwise one should not use linear function approximation in the first place.

Before getting into details, we first give an overview of our main techniques.
As we have discussed, the main technical challenge is to design an efficient exploration algorithm, and decide when to stop exploring.
Our main algorithmic contribution is to introduce a new notion, the Distribution Shift Error Checking (DSEC) oracle (cf. Oracle~\ref{oracle:dist_shift_oracle} and Oracle~\ref{oracle:dist_shift_oracle_sample}).
Given two distributions $\dist_1$ and $\dist_2$, this oracle returns \true~if there exists a function in the pre-specified function class which predicts well on $\dist_1$ but predicts poorly on $\dist_2$.
We will show that this is an extremely useful notion.
If the oracle returns \false, then our learned predictor performs well on both distributions.
If the oracle returns \true, we know $\dist_2$ contains information that we can explore, which implies the policy that generates $\dist_2$ is a valuable exploration policy.
We will discuss the DSEC oracle in more detail in Section~\ref{sec:dsec}.

With this oracle at hand, a natural question is how many times this oracle will return \true, as we will not stop exploring if it always returns \true.
A technical contribution of this paper is to show for the linear function class, this oracle will only return \true~at most polynomial number of times.
At a high level, whenever the oracle returns \true, it means we will learn something new from $\dist_2$.
However, since the complexity of the function class is bounded, we cannot learn new things too many times.
Formally, we use a potential function argument to make this intuition rigorous (cf. Lemma~\ref{lem:mat_potential}).

\subsection{Organization}
\label{sec:org}
This paper is organized as follows.
In Section~\ref{sec:rel}, we review related work.
In Section~\ref{sec:pre}, we introduce necessary notations, definitions and assumptions.
In Section~\ref{sec:dsec}, we describe the DSEC oracle in detail.
In Section~\ref{sec:algo}, we present our general algorithm for $Q$-learning with function approximation.
In Section~\ref{sec:linearq}, we instantiate the general algorithm to the linear function approximation case, and present our main theorem.
We conclude and discuss future works in Section~\ref{sec:dis}.
All technical proofs are deferred to the supplementary material.

\section{Related Work}
\label{sec:rel}
Classical theoretical reinforcement learning literature studies asymptotic behavior of concrete algorithms.
The most related work is \cite{melo2007q}, which studies an online $Q$-learning algorithm with a fixed exploration policy.
They showed that the estimated $Q$-function converges to the true $Q$-function asymptotically.
Recently, Zou et al.~\cite{zou2019finite} derived finite sample bounds for the same setting.
The major drawback of these works is that they put strong assumptions on the fixed exploration policy.
For example, Zou et al.~\cite{zou2019finite} require that the covariance matrix induced by the exploration policy has lower bounded least eigenvalue. 
In general, it is hard to verify whether a policy has such benign properties.

%Tabular
While it is challenging to design efficient algorithms for $Q$-learning with function approximation, in the tabular setting, exploration becomes much easier, as one can first estimate the transition probabilities and then design exploration policies accordingly.
There is a substantial body of work on tabular reinforcement learning~\cite{agrawal2017posterior,jaksch2010near,kakade2018variance,azar2017minimax,kearns2002near,dann2017unifying}.
For $Q$-learning, Strehl et al.~\cite{strehl2006pac} introduced the delayed $Q$-learning algorithm which has $O(T^{4/5})$ regret bound.
A recent work by Jin et al.~\cite{jin2018q} gave a UCB-based algorithm which enjoys $O\left(\sqrt{T}\right)$ regret bound.
More recent papers provided refined analyses that exploit benign properties of the MDP, e.g., the gap between the optimal action and the rest~\cite{simchowitz2019non,zanette2019tighter}, which our algorithm also utilizes.
However, it is hard to generalize the exploration techniques in these previous works, since they all rely on the fact that the total number of states is finite.

%heuristic exploration methods: Ian Osband's paper
Recently, exploration algorithms are proposed for $Q$-learning with function approximation.
Osband et al.~\cite{osband2016generalization}~
proposed a Thompson-sampling based method for the linear function class.
Later works further generalized sampling-based algorithms to $Q$-functions with neural network parameterization~\cite{azizzadenesheli2018efficient,lipton2018bbq,fortunato2018noisy}. 
However, none of these works have polynomial sample complexity guarantees.
Pazis and Parr~\cite{pazis2013pac}
gave a nearest-neighbor-based algorithm for exploration in continuous state space.
However, in general this type of algorithms has exponential dependence on the state dimension.

%Wen Zheng's paper 
The seminal work by Wen and Van Roy~\cite{wen2013efficient} proposed an algorithm, optimistic constraint propagation (OCP), which enjoys polynomial sample complexity bounds for a family of $Q$-function classes, including the linear function class as a special case.
However, their algorithm can only deal with deterministic systems, i.e., both transition dynamics and rewards are deterministic.
Li et al \cite{li2011knows} proposed a Q-learning algorithm which requires the Know-What-It-Knows oracle.
However, it is in general unknown how to implement such oracle.
A line of recent papers study $Q$-learning in the general state-action metric space~\cite{yang2019learning,song2019efficient}.
However, due to the generality, the sample complexity has exponential dependence on the dimension.

%CDP related papers
Finally, a recent series of work introduced \emph{contextual decision processes} (CDPs)~\cite{krishnamurthy2016pac,jiang2017contextual,dann2018polynomial,sun2018model,du2019provably} and developed algorithms with polynomial sample complexity guarantees.
Our paper is not directly comparable with these results, since they can deal with general function classes.
In some cases, the function approximation is even not for the $Q$-function, but for modeling the map from the observed state to hidden states~\cite{du2019provably}.
The result in \cite{jiang2017contextual} also applies to our setting.
However, their bound depends on both the function class complexity and a quantity called the Bellman rank. 
Conceptually, since our bound does not depend on the Bellman rank, our result thus demonstrates that the function class complexity alone is enough for efficient learning.

\section{Preliminaries}
\label{sec:pre}
\paragraph{Notations}
We begin by introducing necessary notations.
We write $[h]$ to denote the set $\left\{1,\ldots,h\right\}$.
For any finite set $S$, we write $\unif\left(S\right)$ to denote the uniform distribution over $S$ and $\simplex\left(S\right)$ to denote the probability simplex.
Let $\norm{\cdot}_2$ denote the Euclidean norm of a finite-dimensional vector in $\mathbb{R}^d$. 
For a symmetric matrix $\mat{A}$, let $\normop{\mat{A}}$ denote its operator norm and $\lambda_{i}\left(\mat{A}\right)$ denote its $i$-th eigenvalue.
Throughout the paper, all sets are multisets, i.e., a single element can appear multiple times. 

\paragraph{Markov Decision Processes (MDPs)}
Let $\mdp =\left(\states, \actions, H,\trans,R \right)$ be an MDP, where $\states$ is the (possibly uncountable) state space, $\actions$ is the finite action space with $\abs{\actions}=K$, $H \in \mathbb{Z}_+$ is the planning horizon, $\trans: \states \times \actions \rightarrow \simplex\left(\states\right)$ is the transition function and $R : \states \times \actions \rightarrow \simplex(\mathbb{R})$ is the reward distribution.

A (stochastic) policy $\pi: \states \rightarrow \simplex(\actions)$ prescribes a distribution over actions for each state.
Without loss of generality, we assume a fixed start state $s_1$.\footnote{
Some papers assume the starting	state  is sampled from a distribution $P_1$.
Note this is equivalent to assuming a fixed state $s_1$, by setting $\trans(s_1,a) = P_1$ for all $a \in \actions$ and now our $s_2$ is equivalent to the starting state in their assumption.
	}
The policy $\pi$ induces a random trajectory $s_1,a_1,r_1,s_2,a_2,r_2,\ldots,s_H,a_H,r_H$ where $r_1 \sim R(s_1,a_1)$, $s_2 \sim \trans(s_1,a_1)$, $a_2 \sim \pi(s_2)$, etc.
For a given policy $\pi$, we use $\dist^\pi_h$ to denote the distribution over $\states_h$ induced by executing policy $\pi$.

To streamline our analysis, we denote $\states_h \subseteq \states$ to be the set of states at level $h$.
Similar to previous theoretical reinforcement learning results, we also assume $r_h \ge 0$ for all $h \in [H]$ and $\sum_{h=1}^{H}r_h \le 1$~\cite{jiang2017contextual}.
Our goal is to find a policy $\pi$ that maximizes the expected reward $\expect\left[\sum_{h=1}^H r_h\mid \pi\right]$. 
We use $\pi^*$ to denote the optimal policy.

%An important concept in reinforcement learning is $Q$-function.
Given a policy $\pi$, a level $h \in [H]$ and a state-action pair $(s,a) \in \states_h \times \actions$, the $Q$-function is defined as $Q^\pi(s,a) = \expect\left[\sum_{h' = h}^{H}r_{h'}\mid s_h =s, a_h = a, \pi\right]$.
It will also be useful to define the value function of a given state $s \in \states_h$ as $V^\pi(s)=\expect\left[\sum_{h' = h}^{H}r_{h'}\mid s_h =s, \pi\right]$.
For simplicity, we denote $Q^*(s,a) = Q^{\pi^*}(s,a)$ and $V^*=V^{\pi^*}(s)$.
Recall that if we know $Q^*$, we can just choose the action greedily: $\pi^*(s) = \argmax_{a \in \actions} Q^*(s,a)$.
In this paper, we make the following assumption about the variation of the suboptimality of policies~\cite{everitt2002cambridge}.
\begin{asmp}[Low Variance Condition]\label{asmp:var_bound}
There exists a constant $1 \le C< \infty$, such that for any fixed level $h \in [H]$ and deterministic policy $\pi$, \begin{align*}
\expect_{s \sim \dist_h^\pi}\left[\abs{V^\pi(s)-V^{*}(s)}^2\right] \le C \left(\expect_{s \sim \dist_h^\pi}\left[\abs{V^\pi(s)-V^{*}(s)}\right]\right)^2.
\end{align*}
\end{asmp}
Intuitively, this assumption says the variation due to the randomness over states is not too large comparing to the mean.
For example, if the transition is deterministic, then this assumption holds with $C=1$.

Our paper also relies on the following fine-grained characterization of the MDP.
\begin{defn}[Suboptimality Gaps]
	\label{defn:gap}
	Given $s \in \states$ and $a \in \actions$, the gap is defined as $\gap(s,a) = V^*(s) - Q^*(s,a)$.
	The minimum gap is defined as $\gamma \triangleq \min_{s \in \states, a \in \actions}\left\{\gap(s,a): \gap (s,a) >0\right\}$.
\end{defn}
This notion has been extensively studied in the bandit literature to obtain fine-grained bounds~\cite{audibert2010best}.
Recently, Simchowitz et al.~
\cite{simchowitz2019non} derived regret bounds in tabular MDPs based on this notion.
In this paper we assume $\gamma > 0$, and the sample complexity of our algorithm depends polynomially on $1 / \gamma$.
Notice that assuming $\gamma$ is strictly positive is not a restrictive assumption for the \emph{finite action setting} considered in this paper.
First, in the contextual linear bandit literature, this assumption is widely discussed. 
See, e.g., \cite{abbasi2011improved, dani2008stochastic}.
The notion, context, in the bandit literature is essentially  $\phi(s)$ in our paper and the number of contexts can also be infinite.
Second, there are many natural environments in RL which satisfy this assumption.
For example, in many environments, states can be classified as good states and bad states.
In these environments, an agent can obtain a reward only if it is in a good state.
There are also two kinds of actions: good actions and bad actions.
If the agent is in a good state and chooses a good action, the agent will transit to a good state. 
If the agent chooses a bad action, the agent will transit to a bad state.
If the agent is in a bad state, whatever action the agent chooses, the agent will transit to a bad state.
Note that for this kind of environments, there is a strictly positive gap between good actions and bad actions when the agent is in good states and there is no difference between good actions and bad actions when the agent is in bad states.
In this case, $\gamma$ is strictly positive, since by Definition~\ref{defn:gap}, we take the minimum over all state-action pairs with {\em strictly} positive gap.
These environments are natural generalizations of the combination lock environment~\cite{kakade2003sample}.
Some Atari games, e.g. Freeway, have a similar flavor as these environments.

\paragraph{Function Approximation}
When the state space is large, we need structures on the state space so that reinforcement learning methods can generalize.
We constrain the optimal $Q$-function to a pre-specified function class $\qclass$~\cite{bertsekas1996neuro}, e.g., the class of linear functions.
In this paper we associate each $h \in [H]$ and $a \in \actions$ with a $Q$-function $f_h^a \in \qclass$.
We make the following assumption.
\begin{asmp}
	\label{asmp:small_approximation_error}
	For every $(h,a) \in [H] \times \actions$, its associated optimal $Q$-function is in $\qclass$. 
\end{asmp}
This is a widely used assumption in the theoretical reinforcement learning literature~\cite{jiang2017contextual}.
Note that without this assumption, we cannot hope to obtain optimal policy using functions in $\qclass$ as the $Q$-function.

The focus of this paper is about linear function class which is one of the most popular function classes used in practice.
This function class depends on a feature extractor $\phi: \states \to \mathbb{R}^d$ which can be a hand-crafted feature extractor or a pre-trained neural network that transforms a state to a $d$-dimension embedding.
For $s_h \in \states_h$ and $a \in \actions$, our estimated optimal $Q$-function admits the form $f_h^a(s) = \phi(s)^\top \hat{\theta}_h^a$ where $\hat{\theta}_h^a \in \mathbb{R}^d$ only depends on the level $h \in [H]$ and $a \in \actions$.
Therefore, we only need to learn $K \cdot H$ $d$-dimension vectors (linear coefficients), since by Assumption~\ref{asmp:small_approximation_error}, for each $h \in [H]$ and $a \in \actions$, there exists $\theta_h^a \in \mathbb{R}^d$ such that for all $s_h \in \states_h$, $Q^*(s_h,a) = \phi(s_h)^\top \theta_h^a$.

The aim of this paper is to obtain polynomial sample complexity bounds.
To this end, we also need some regularity conditions.
\begin{asmp}
\label{asmp:bounded_norm}
For all $s \in \states$, its feature is bounded $\norm{\phi(s)}_2 \le 1$.
For all $h \in [H]$, $a \in \actions$, the true linear predictor is bounded $\norm{\theta_h^a}_2 \le 1$.
\end{asmp}

\section{Distribution Shift Error Checking Oracle}
\label{sec:dsec}
%In this section we describe our approach.
%In Section~\ref{sec:dsec}, we introduce our new notion, the distribution shift error checking (DSEC) oracle, which plays a key role in our algorithm.
%In Section~\ref{sec:algo}, we describe our general algorithm assuming black-box access to DSEC.
%%We emphasize that 
%Finally, in Section~\ref{sec:linearq} we implement the DSEC oracle and present our main theoretical result.
%
%\subsection{Distribution Shift Error Checking Oracle}
%\label{sec:dsec}
As we have discussed in Section~\ref{sec:intro}, in reinforcement learning, we often need to know whether a predictor learned from samples generated from one distribution $\dist_1$ can predict well on another distribution $\dist_2$.
This is related to off-policy learning for which one often needs to bound the probability density ratio between $\dist_1$ and $\dist_2$ on all state-action pair. 
%\simon{maybe discuss off-policy learning blah blah}
When function approximation scheme is used, we naturally arrive at the following oracle.
\begin{oracle}[Distribution Shift Error Checking Oracle $\left(\dist_1,\dist_2,\epsilon_1,\epsilon_2, \Lambda\right)$]\label{oracle:dist_shift_oracle}
For two given distributions $\dist_1,\dist_2$ over $\states$, two real numbers $\epsilon_1$ and $\epsilon_2$, and a regularizer $\Lambda : \qclass \times \qclass \to \mathbb{R}$, 
define \begin{align*}
v = &\max_{f_1,f_2 \in \qclass} \expect_{s\sim \dist_2}\left[\left(f_1(s)-f_2(s)\right)^2\right]\\
\text{s.t. } &\expect_{s \sim \dist_1} \left[\left(f_1(s)-f_2(s)\right)^2\right] + \Lambda(f_1,f_2) \le \epsilon_1.
\end{align*}
The oracle returns \true~if $v \ge \epsilon_2$, and \false~otherwise.
\end{oracle}
To motivate this oracle, let $f_2$ be the optimal $Q$-function and $f_1$ is a predictor we learned using samples generated from distribution $\dist_1$.
In this scenario, we know $f_1$ has a small expected error $\epsilon_1$ on distribution $\dist_1$.
Note since we maximize over the entire function class $\qclass$,
$v$ is an upper bound on the expected error of $f_1$ on distribution $\dist_2$.
If $v$ is large enough, say larger than $\epsilon_2$, then it could be the case that we cannot predict well on distribution $\dist_2$.
On the other hand, if $v$ is small, we are certain that $f_1$ has small error on $\dist_2$.
Here we add a regularization term $\Lambda(f_1,f_2)$ to prevent pathological cases.
The concrete choice of $\Lambda$ will be given later.

In practice, it is impossible to get access to the underlying distributions $\dist_1$ and $\dist_2$.
Thus, we use samples generated from these two distributions instead.
\begin{oracle}[Sample-based Distribution Shift Error Checking Oracle 
$\left(D_1,D_2,\epsilon_1,\epsilon_2, \Lambda\right)$]
	\label{oracle:dist_shift_oracle_sample}
	For two set of states $D_1, D_2 \subseteq \states$, two real numbers $\epsilon_1$ and $\epsilon_2$, and a regularizer $\Lambda : \qclass \times \qclass \to \mathbb{R}$, 
	define \begin{align*}
	&v = \max_{f_1,f_2 \in \qclass} \frac{1}{\abs{D_2}}\sum_{t_i \in D_2}\left[\left(f_1(t_i)-f_2(t_i)\right)^2\right]\\
	&\text{s.t. } \frac{1}{\abs{D_1}}\sum_{s_i \in D_1} \left[\left(f_1(s_i)-f_2(s_i)\right)^2\right] + \Lambda(f_1,f_2)\le \epsilon_1.
	\end{align*}
	The oracle returns \true~if $v \ge \epsilon_2$  and \false~otherwise.
	If $D_1 = \emptyset$, the oracle simply returns \true.
\end{oracle}
An interesting property of Oracle~\ref{oracle:dist_shift_oracle_sample} is that it only depends on the states and does not rely on the reward values.

\section{Difference Maximization $Q$-learning }
\label{sec:algo}

%\subsection{Main Algorithm}
%\label{sec:algo}
Now we describe our algorithm.
We maintain three sets of global variables.
\begin{enumerate*}
\item $\left\{f_h^a\right\}_{a \in \actions, h \in [H]}$. These are our estimated $Q$-functions for all actions $a \in \actions$ and all levels $h \in [H]$.
\item $\left\{\Pi_h\right\}_{h \in [H]}$. For each level $h \in [H]$, $\Pi_h$ is a set of exploration policies for level $h$, which we use to collect data. 
\item $\left\{D_h\right\}_{h \in [H]}$. For each $h \in [H]$, $D_h = \left\{s_{h,i}\right\}_{i = 1}^N$ is a set of states in $\states_h$.
\end{enumerate*}
We initialize these global variables in the following manner.
For $\left\{f_h^a\right\}_{a \in \actions, h \in [H]}$, we initialize them arbitrarily. 
For each $h \in [H]$, we initialize $\Pi_h$ to be a single purely random exploration policy, i.e., $\Pi_h = \{\pi\}$, where $\pi(s) = \unif\left(\actions\right)$ for all $s \in \states$.
We initialize $\left\{D_h\right\}_{h \in [H]}$ to be empty sets.

Algorithm~\ref{algo:general_main} uses Algorithm~\ref{algo:collect_learn} to learn predictors for each level $h \in [H]$.
Algorithm~\ref{algo:collect_learn} takes $h \in [H]$ as input, tries to learn predictors $\left\{f_h^a\right\}_{a \in \actions}$ at level $h$.
Algorithm~\ref{algo:learn_a_level} takes $h \in [H]$ and $a \in \actions$ as inputs, and checks whether the predictors learned for later levels $h' > h$ are accurate enough under the current policy.

Now we explain Algorithm~\ref{algo:collect_learn} and Algorithm~\ref{algo:learn_a_level} in more detail.
Algorithm~\ref{algo:collect_learn} iterates all actions, and for each action $a \in \actions$, it uses Algorithm~\ref{algo:learn_a_level} to check whether we can learn the $Q$-function that corresponds to $a$ well.
After executing Algorithm~\ref{algo:learn_a_level}, we are certain that we can learn $f_h^a$ well (we will explain this in the next paragraph), and thus construct a set of new policies $\Pi_h^a = \{\pi_h^a\}_{\pi_h \in \Pi_h}$, in the following way.
For each policy $\pi_h \in \Pi_h$, we define $\pi_h^a$ as 
\begin{align}
\pi^a_h(s_{h'}) = \begin{cases}
\pi_h(s_{h'}) &\text{ if } h' < h \\
a &\text{ if } h'=h\\
\argmax_{a' \in \actions} f_{h'}^{a'}(s_{h'}) &\text{ if } h' > h
\end{cases}.
\label{eqn:collection_policy}
\end{align}
This policy uses $\pi_h$ as the roll-in policy till level $h$, chooses action $a$ at level $h$ and uses greedy policy with respect to $\left\{f_{h'}^a\right\}_{h' > h, a \in \actions}$, the current estimates of $Q$-functions at level $h+1,\ldots,H$ as the roll-out policy.
In each iteration, we sample one policy $\pi$ uniformly at random from $\Pi^a_h$, and use it to collect $(s, y)$, where $s \in \states_h$ and $y\in \mathbb{R}$ is the on-the-go reward.
In total we collect a dataset $D_h^a$ with size $N \cdot \abs{\Pi_h}$, and we use regression to learn a predictor on these data.
Formally, we calculate 
\begin{align}f_h^a = \argmin_{f \in \qclass} \left[\frac{1}{N \cdot \abs{\Pi_h}}\sum_{(s,y) \in D_h^{a}}\left(f(s)-y\right)^2 + \Gamma(f)\right] . \label{eqn:regerm}
\end{align}
Here, $\Gamma(f)$ represents a regularization term on $f$.
Finally, we update $D_h$ by using each $\pi_h \in \Pi_h$ to collect $N$ states in $\states_h$.

Now we explain Algorithm~\ref{algo:learn_a_level}.
For each $\pi_h \in \Pi_h$, we use $\pi_{h}^a$ defined in~\eqref{eqn:collection_policy} to collect $N$ trajectories.
For each $h' = h+1,\ldots,H$, we set $\widetilde{D}_{\pi_{h}^a,h'}  = \left\{s_{h',i}\right\}_{i=1}^N$, where $s_{h',i}$ is the state at level $h'$ in the $i$-th trajectory. 
Next, for each $h'=H,\ldots,h+1$, we invoke Oracle~\ref{oracle:dist_shift_oracle_sample} on input $D_{h'}$ and $\widetilde{D}_{\pi_h^a,h'}$.
Note that $D_{h'}$ was collected when we execute Algorithm~\ref{algo:collect_learn} to learn the predictors at level $h'$ .
The oracle will return whether our current predictors at level $h'$ can still predict well on the distribution that generates $\widetilde{D}_{\pi_{h}^a,h'}$.
If not, then we add $\pi_{h}^a$  to our policy set $\Pi_{h'}$, and we execute Algorithm~\ref{algo:collect_learn} to learn the predictors at level $h'$ once again.
Note it is crucial to iterate $h'$ from $H$ to $h+1$, so that we will always make sure the predictors at later levels are correct.

\begin{algorithm}[t]
	\floatname{algorithm}{Algorithm}
	\caption{\algo~(DMQ)}
	\label{algo:general_main}
	\begin{algorithmic}[1]
	\Statex \textbf{Output}: A near-optimal policy $\pi$.
\For{$h = H,H-1,\ldots,1$}
	\State Run Algorithm~\ref{algo:collect_learn} on input $h$.
\EndFor
	\State Return $\pifinal$, the greedy policy with respect to $\left\{f_h^a\right\}_{a \in \actions, h\in [H]}$.
	\end{algorithmic}
\end{algorithm}

\begin{algorithm}[t]
	\floatname{algorithm}{Algorithm}
	\caption{}
	\label{algo:collect_learn}
	\begin{algorithmic}[1]
		\Statex \textbf{Input}: $h \in [H]$, a target level.
		\For{$a \in \actions$}
		\State Execute Algorithm~\ref{algo:learn_a_level} on input $(h,a)$.\label{lst:line:check}
		\State Initialize $D_h^a = \emptyset$.
		\State Construct a policy set $\Pi_h^a$ according to~\eqref{eqn:collection_policy}.
		\For{$i=1,\ldots,N \cdot \abs{\Pi_h^a}$}
		\State Sample $\pi \sim \unif\left(\Pi_h^a\right)$.
		\State Use $\pi$ to collect $(s_i,y_i)$, where $s_i \in \states_h$ and $y_i$  is the on-the-go reward.
		\State Add $(s_i,y_i)$ into $D_h^a$.
		\EndFor
		\State Learn a predictor $f_h^a = \argmin_{f \in \qclass} \left[\frac{1}{N \cdot \abs{\Pi_h}}\sum_{(s,y) \in D_h^{a}}\left(f(s)-y\right)^2 + \Gamma(f)\right] $. \label{lst:line:learn_new}
		\EndFor
	\State Set $D_h = \emptyset$.
	\For{$\pi_h \in \Pi_h$}
	\State Use $\pi_h$ to collect a set of states $\left\{s_{\pi_h,i}\right\}_{i=1}^N$, where $s_{\pi_h,i} \in \states_h$.
	\State Add all states $\left\{s_{\pi_h,i}\right\}_{i=1}^N$ into $D_h$. \label{lst:line:add_states}
	\EndFor
	\end{algorithmic}
\end{algorithm}

\begin{algorithm}[t]
	\floatname{algorithm}{Algorithm}
	\caption{}
	\label{algo:learn_a_level}
	\begin{algorithmic}[1]
	\Statex \textbf{Input}: target level $h \in \levels$ and an action $a \in \actions$.
	\For{$\pi_h \in \Pi_h$}
	\State Collect $N$ trajectories using policy $\pi_{h}^a$ defined in \eqref{eqn:collection_policy}.
	\For{$h'=H,H-1,\ldots,h+1$}
		\State Let $\widetilde{D}_{\pi_h^a,h'}  = \left\{s_{h',i}\right\}_{i=1}^N$ be the states at level $h'$ on the $N$ trajectories collected using $\pi_h^a$. \label{lst:line:tilde}
		\State Invoke Oracle~\ref{oracle:dist_shift_oracle_sample} on input $\left(D_{h'}, \widetilde{D}_{\pi_h^a,h'}, \frac{\epsstat}{\abs{\Pi_{h'}}}, \epstest, \Lambda_{\Pi_{h'}}\right)$. \label{lst:line:invoke_oracle}
		\If{Oracle~\ref{oracle:dist_shift_oracle_sample} returns \true} \label{lst:line:oracle}
		\State $\Pi_{h'} = \Pi_{h'} \cup \left\{\pi_{h}^a\right\}$. \label{lst:line:add_policy}
		\State Execute Algorithm~\ref{algo:collect_learn} on input $h'$.
		\EndIf
	\EndFor
	\EndFor

	\end{algorithmic}
\end{algorithm}

\section{Provably Efficient $Q$-learning with Linear Function Approximation}
\label{sec:linearq}
Now we instantiate our algorithm to the linear function class.
For the regression problem in~\eqref{eqn:regerm}, 
we set $\Gamma(\theta)= \ridge \norm{\theta}_2^2$.
The concrete choice of the parameter $\ridge$ will be given later.
In this case, the regression program represents the ridge regression estimator
\begin{align*}
\hat{\theta}_{h}^a = \left(\frac{1}{\abs{D_h^a}}\sum_{(s, y) \in D_h^a}  \phi(s)\phi(s)^\top + \ridge \cdot \mat{I}\right)^{-1} \left(\frac{1}{\abs{D_h^a}}\sum_{(s, y)  \in D_h^a}y \cdot \phi(s)\right),
\end{align*}
and $f_h^a(s_h) = \phi(s_h)^\top \hat{\theta}_h^a$ for $s_h \in \states_h$.

For Oracle~\ref{oracle:dist_shift_oracle_sample}, we choose $\Lambda_{\Pi_{h'}}(\theta_1,\theta_2) = \epsr / |\Pi_{h'}| \cdot \norm{\theta_1-\theta_2}_2^2$.
The concrete choice of the parameter $\epsr$ will be given later.
Since $\qclass$ is the linear function class, the optimization problem is equivalent to the following program \begin{align*}
	&\max_{\theta_1,\theta_2} \frac{1}{\abs{D_2}}\sum_{t_i \in D_2}\left( (\theta_1-\theta_2)^\top \phi(t_i)\right)^2 \\
	&\text{s.t. } \frac{1}{\abs{D_1}}\sum_{s_i \in D_1}\left( (\theta_1-\theta_2)^\top \phi(s_{i})\right)^2 +\epsr / |\Pi_{h'}| \cdot \norm{\theta_1-\theta_2}_2^2 \le \epsilon_1.
\end{align*}
We let $\mat{M}_1 = \frac{1}{\abs{D_1}}\sum_{s_i \in D_1}\phi(s_i)\phi(s_i)^\top + \epsr / |\Pi_{h'}|  \cdot \mat{I}$, $\mat{M}_2 = \frac{1}{\abs{D_2}}\sum_{t_i \in D_2}\phi(t_i)\phi(t_i)^\top$, and let $\tilde{\theta} \triangleq \frac{1}{\sqrt{\epsilon_1}}\mat{M}_1^{1/2}(\theta_1-\theta_2)$, then the optimization problem can be further reduced to \begin{align*}
	\max_{\tilde{\theta}}~\tilde{\theta}^\top \left(\epsilon_1\mat{M}_1^{-\frac12}\mat{M}_2\mat{M}_1^{-\frac12}\right) \tilde{\theta}\quad \text{ s.t. } \norm{\tilde{\theta}}_2 \le 1,
\end{align*}
which is equivalent to compute the top eigenvalue of $\epsilon_1\mat{M}_1^{-\frac12}\mat{M}_2\mat{M}_1^{-\frac12}$.
Therefore, the regression problem in~\eqref{eqn:regerm} and Oracle~\ref{oracle:dist_shift_oracle_sample} can be efficiently implemented.
Our main result is the following theorem.
\begin{thm}[Provably Efficient $Q$-Learning with Linear Function Approximation]
\label{thm:main_linear}
Let $\epsilon \le \poly(\gamma,1/C,1/d,1/H,1/K)$ be the target accuracy parameter.
Under Assumption~\ref{asmp:var_bound},~\ref{asmp:small_approximation_error} and~\ref{asmp:bounded_norm}, then using at most $\poly(1/\epsilon)$ trajectories, with high probability, Algorithm~\ref{algo:general_main} returns a policy $\pifinal$ that satisfies $V^{\pifinal}(s_1) \ge V^{*}(s_1) -\epsilon$.
\end{thm}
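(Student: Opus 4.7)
}

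The plan is to prove correctness by backward induction on the level $h = H, H-1, \ldots, 1$, while simultaneously controlling the total sample complexity via a potential-function argument on the covariance matrices maintained implicitly by $D_h$. At the top of each iteration I maintain the invariant that the predictors $\{f_{h'}^{a}\}_{h' > h, a \in \actions}$ satisfy an $L^2$-accuracy guarantee on the mixture $\frac{1}{|\Pi_{h'}|}\sum_{\pi \in \Pi_{h'}} \dist^\pi_{h'}$, and that whenever Oracle~\ref{oracle:dist_shift_oracle_sample} returns \false~on input $(D_{h'}, \widetilde{D}_{\pi_h^a, h'}, \ldots)$, this $L^2$ accuracy transfers, up to a small slack $\epstest$, to the new distribution $\dist^{\pi_h^a}_{h'}$. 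These two invariants together imply, via the regression consistency analysis, that the ridge estimate $\hat\theta_h^a$ computed in \eqref{eqn:regerm} predicts $Q^*(\cdot, a)$ well on every roll-in distribution in $\Pi_h$.

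\paragraph{Sample complexity via a matrix potential.} The main obstacle is bounding the number of times the oracle returns \true, because each occurrence triggers a recursive call to Algorithm~\ref{algo:collect_learn} and potentially grows $\Pi_{h'}$. I plan to invoke the potential-function lemma (\textbf{Lemma~\ref{lem:mat_potential}} referenced in the introduction) applied to $\mat{M}_1$. Recall from the reduction in Section~\ref{sec:linearq} that the oracle returns \true~iff the top eigenvalue of $\epsilon_1 \mat{M}_1^{-1/2}\mat{M}_2\mat{M}_1^{-1/2}$ exceeds $\epstest$. Adding the new policy $\pi_h^a$ to $\Pi_{h'}$ and rerunning Algorithm~\ref{algo:collect_learn} at level $h'$ causes the empirical second-moment matrix associated with $D_{h'}$ to absorb a rank contribution in the direction of a maximizing eigenvector; this forces a multiplicative increase in $\det(\mat{M}_1)$ by a factor bounded below by $1 + \Omega(\epstest/\epsilon_1)$. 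Since $\lambda_{\max}(\mat{M}_1) \le 1 + \epsr/|\Pi_{h'}|$ and the ambient dimension is $d$, the determinant cannot grow beyond a polynomial factor, which caps $|\Pi_{h'}|$ at $\poly(d, 1/\epstest, 1/\epsilon_1)$ per level. Summed over $H$ levels and $K$ actions (and accounting for the recursive triggering pattern), this yields the claimed $\poly(1/\epsilon)$ trajectory bound.

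\paragraph{From $L^2$ accuracy to a near-optimal policy.} Conditional on termination with all oracle calls returning \false, I next show $V^{\pifinal}(s_1) \ge V^*(s_1) - \epsilon$. The key steps are: (i) Standard ridge regression and uniform-convergence bounds (combined with $\Lambda$ to regularize over the high-dimensional direction) show that for each $(h,a)$, $\expect_{s \sim \dist^{\pi}_h}[(f_h^a(s) - \widetilde Q(s,a))^2] \le \epsilon_s$ for every $\pi \in \Pi_h$, where $\widetilde Q$ is the $Q$-function associated with executing $a$ at level $h$ and then rolling out greedily under $\{f_{h'}^{a'}\}_{h'>h}$. (ii) By the inductive hypothesis on later levels and the gap assumption (Definition~\ref{defn:gap}), the greedy rollout agrees with $\pi^*$ on all but an $O(\epsilon_s/\gamma^2)$-mass of the state distribution, so $\widetilde Q(s,a) \approx Q^*(s,a)$ in $L^2$. (iii) Conversion from $L^2$ error to $L^1$ error uses Assumption~\ref{asmp:var_bound} (bounded coefficient of variation) on the deterministic greedy policy $\pifinal$, inflating the error by at most $\sqrt{C}$. (iv) The oracle's \false~verdict transfers the $L^2$ bound from the roll-in mixtures to any distribution $\dist^{\pifinal}_h$ encountered during the final rollout, because $\pifinal$ restricted to its first $h$ steps matches some composition of policies already covered by $\Pi_h$ up to a gap-induced correction.

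\paragraph{Anticipated difficulties.} I expect the delicate step to be (iv): propagating the DSEC guarantee simultaneously across all $H$ levels for the \emph{final} policy $\pifinal$, since $\pifinal$ was not itself tested by any oracle call. The resolution I plan is inductive: at level $h$, the greedy choice $\pifinal(s_h) = \argmax_{a} f_h^a(s_h)$ equals $\pi^*(s_h)$ on the $\gamma$-well-separated portion of the state space (by (ii)), so the induced roll-in distribution $\dist^{\pifinal}_{h+1}$ is close in total variation to some distribution generated by a tested policy in $\Pi_{h+1}$. I will quantify this via a telescoping / simulation-lemma-style argument that accumulates errors additively across levels, using the CoV assumption to control the blow-up. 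Choosing $\epstest, \epsstat, \epsr, N$ as polynomials in $\epsilon, \gamma, 1/C, 1/d, 1/H, 1/K$ with sufficient slack will close the induction and yield the stated bound.
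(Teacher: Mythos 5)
Your proposal follows essentially the same route as the paper: the same two-invariant induction (accuracy of each $\hat{\theta}_h^a$ on the roll-in mixture over $\Pi_h$, plus transfer of later-level accuracy to $\dist_{h'}^{\pi_h^a}$ whenever the oracle returns \false), ridge regression with biased labels, the gap $\gamma$ plus Markov's inequality to turn $L^2$ error into agreement with $\pi^*$, and the matrix-determinant potential (Lemma~\ref{lem:mat_potential}) to bound how often the oracle can return \true, hence $\abs{\Pi_h}$.

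Three points where your plan deviates from, or is less sharp than, the paper's execution. First, the step you flag as delicate (your (iv)) does not require any total-variation or simulation-lemma argument: since $s_1$ is fixed and the tested policies $\pi_1^a$ of~\eqref{eqn:collection_policy} take action $a$ at level $1$ and then act greedily with respect to the same estimates $\{f_{h'}^{a'}\}$, the final greedy policy $\pifinal$ induces exactly the distribution $\dist_{h'}^{\pi_1^{\hat a}}$ at every level $h' \ge 2$ for the greedy action $\hat a$, so the \false~verdicts already certify \eqref{eqn:predictor_expectaton_bound} for $\pifinal$ directly. Second, Assumption~\ref{asmp:var_bound} is used in the opposite direction from what you describe: Lemma~\ref{lem:policy_correctness} gives an $L^1$ bound $\expect\bigl[V^* - V^{\pi_h^a}\bigr] \le \epsilon$ for the roll-out, and the coefficient-of-variation assumption upgrades this to a second-moment bound $C\epsilon^2$ on the bias of the regression labels (the $L^2 \to L^1$ direction is just Jensen and needs no assumption). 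Third, the potential argument should be run on the population covariances $\expect_{s\sim\dist_{h'}^{\pi}}[\phi(s)\phi(s)^\top]$ after invoking the concentration bound (Lemma~\ref{lem:cov_perturb}), as in Lemma~\ref{lem:policy_set_bounded}: the sets $D_{h'}$ are discarded and recollected each time Algorithm~\ref{algo:collect_learn} reruns, so the picture of the empirical matrix monotonically ``absorbing'' a rank-one contribution is not literally accurate, and the clean multiplicative determinant growth (by a factor $1+\epstest/(d\,\epsstat) \ge 2$, yielding $\abs{\Pi_{h'}} \le 2d\log(d/\epsr)$) is obtained at the population level. None of these is a fatal obstruction, but all three must be repaired to close your induction as stated.
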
 
This theorem demonstrates that if  the true $Q$-function is linear, then it is actually possible to learn a near-optimal policy with polynomial number of samples.
We refer readers to the Proof of Theorem~\ref{thm:main_linear} for the specific values of $\epstest,\epsstat,\epsn, \ridge, \epsr, N$.
Furthermore, our algorithm also runs in polynomial time.
Therefore, this is the first provably efficient algorithm for $Q$-learning with function approximation in the stochastic setting.

Now we briefly sketch the proof of Theorem~\ref{thm:main_linear}.
The full proof is deferred to Section~\ref{sec:lemmas}.
Our proof follows directly from the design of our algorithm.
First, through classical analysis of linear regression, we know the learned predictor $\hat{\theta}_h^a$ can predict well on the distribution induced by $\pi_h^a$.
Second, Oracle~\ref{oracle:dist_shift_oracle_sample} guarantees that if it returns \false, then the learned predictors at level $h'$ can predict well on the distribution over $\states_h$ induced by the policy $\pi_h^a$.
Therefore, the labels we used to learn $\theta_a^h$ have only small bias, and thus, we can learn $\theta_a^h$ well.
Now the trickiest part of the proof is to show Oracle~\ref{oracle:dist_shift_oracle_sample} returns \true~at most polynomial number of times.
To establish this, for each $h \in [H]$, we construct a potential function in terms of covariance matrices induced by the policies in $\Pi_h$.
We show whenever a new policy is added to $\Pi_h$, this potential function must be increased by a multiplicative factor.
We further show this potential function is at always polynomially upper bounded by the size of the policy set.
Therefore, we can conclude the size of $\Pi_h$ is polynomially upper bounded.
See Lemma~\ref{lem:mat_potential} for details.

\section{Discussion}
\label{sec:dis}
By giving a provably efficient algorithm for $Q$-learning with function approximation, this paper paves the way for rigorous studies of modern model-free reinforcement learning methods with function approximation.
Now we list some future directions.

\paragraph{Regret Bound}
This paper  presents a PAC bound but no regret bound.
Note that we assume the gap between the on-the-go reward of the best action and the rest is strictly positive.
In the tabular setting, previous work showed that under this assumption, one can obtain  $\log T$ regret bound~\cite{simchowitz2019non,zanette2019tighter}.
We believe it would be a very strong result to prove (or disprove) $\log T$ regret bound in the setting considered in this paper.

\paragraph{$Q$-learning with General Function Class}
While the main theorem in this paper is about the linear function class, the DSEC oracle and the general algorithmic framework applies to any function classes.
From an information-theoretic point of view, given Oracle~\ref{oracle:dist_shift_oracle_sample}, can we use it to design algorithms for general function class with polynomial sample complexity guarantees?
For example, if the $Q$-function class has a bounded VC-dimension, can Algorithm~\ref{algo:general_main} give a polynomial sample complexity guarantee?
We believe a generalization of Lemma~\ref{lem:mat_potential} is required to resolve this question.
Another interesting problem is to generalize our algorithm to the case that the $Q$-function is not exactly linear but can only be approximated by a linear function. 

From the computational point of view, can we characterize the function classes for which we have an efficient solver for Oracle~\ref{oracle:dist_shift_oracle_sample}?
For those we do not have such exact solvers, can we develop a relaxed version of Oracle~\ref{oracle:dist_shift_oracle_sample} which, possibly sacrificing the sample efficiency, makes the optimization problem tractable.
This idea was used in the sparse learning literature~\cite{vu2013fantope}.
Another interesting problem is to improve the computational efficiency of our algorithm to make it fast enough to be used in practice. 

\paragraph{Toward a Rigorous Theory for DQN}
Deep $Q$-learning (DQN) is one of the most popular model-free methods in modern reinforcement learning.
Recent studies established that over-parameterized neural networks are equivalent to kernel predictors~\cite{jacot2018neural,arora2019exact} with multi-layer kernel functions.
Since kernel predictors can be viewed as linear predictors in infinite dimensional feature spaces, can we adapt our algorithm to over-parameterized neural networks and multi-layer kernels, and prove polynomial sample complexity guarantees when, e.g., the true $Q$-function has a small reproducing Hilbert space norm?

\section*{Acknowledgements}
\label{sec:ack}
The authors would like to thank Nan Jiang, Akshay Krishnamurthy, Wen Sun, Yifan Wu, Yining Wang and Lin F. Yang for useful discussions.
The work was initiated while S. S. Du was an intern at MSR NYC.
This paper is finished while S.S. Du was a Ph.D. student at Carnegie Mellon University. 
Part of this work was done while S. S. Du and R. Wang were visiting Simons Institute.

\bibliography{simonduref}
\bibliographystyle{plain}

\newpage
\appendix

\section{Proof of Theorem~\ref{thm:main_linear}}
\label{sec:lemmas}
In this section we give the proof of Theorem~\ref{thm:main_linear}.

\begin{proof}[Proof of Theorem~\ref{thm:main_linear}]
In the proof we set
$\epstest= \frac{\gamma^2\epsilon}{5H}$, $\epsstat=96C \cdot \epsilon^2d\log\left(d/\epsilon\right)$, $\epsn = \epsilon^2$, $\ridge = \epsilon^2$, $\epsr = \epsilon^6$, $B = 12 d \log(d / \epsilon)$ and $N= \frac{d}{\epsr^2}\mathrm{polylog}(\frac{1}{\epsilon})$.
First, by Lemma~\ref{lem:policy_set_bounded}, we know with high probability over the generating process of $\left\{D_h\right\}_{h \in [H]}$, 
we have $\abs{\Pi_h} \le B$ 
for all $h \in [H]$. 
Note this also shows our algorithm ends in polynomial time.
In the following we condition on this event.

We will prove $\pifinal$ satisfies
\[
\prob_{s_h \sim \dist_h^{\pifinal}} \left[\pifinal(s_{h})\neq \pi^*(s_h)\right] \le \frac{\epsilon}{H}
\]
 for all $h \in [H]$.
By Lemma~\ref{lem:policy_correctness}, we know this implies the main conclusion.

To prove this property, it suffices to show our estimated predictors satisfy\begin{align*}
\prob_{s_h \sim \dist_h^{\pifinal}}\left[\abs{f_h^a(s_h) - Q^*(s_h,a)} \le \frac{\gamma}{2}\right] \le \frac{\epsilon}{H}
\end{align*}
for all $h\in [H]$ and $a \in \actions$.
%because the optimal action and the rest has a $\gamma$ gap.
Since $f_h^a(s_h) = \phi(s_h)^\top \hat{\theta}_h^a$, by Markov's inequality, we only need to show \begin{align}
\expect_{s_h \sim \dist_h^{\pifinal}}\left[\left(\left(\hat{\theta}_h^a \right)^\top \phi(s_{h}) - Q^*(s_h,a)\right)^2\right] \le \frac{\gamma^2\epsilon}{4H}. \label{eqn:predictor_expectaton_bound}
\end{align}

We prove the following two invariants regarding the algorithm.
\begin{enumerate}
\item 
Each time we update $f_{h}^{a}$ (and thus $\hat{\theta}_{h}^{a}$) in Line~\ref{lst:line:learn_new} of Algorithm~\ref{algo:collect_learn}, for all $\pi \in \Pi_h$, it holds that 
\[
\expect_{s_h \sim \dist_h^{\pi}}\left[\left(\left(\hat{\theta}_h^a \right)^\top \phi(s_h) - Q^*(s_h, a)\right)^2\right] \le \frac{\gamma^2\epsilon}{4H}
\]
and $\norm{\hat{\theta}_h^a}_2 \le 1 / \ridge$.
\item Each time Oracle~\ref{oracle:dist_shift_oracle_sample} returns \false~in Line~\ref{lst:line:invoke_oracle} of Algorithm~\ref{algo:learn_a_level}, it holds that for all actions $a' \in \actions$,
\[
\expect_{s_{h'} \sim \dist_{h'}^{\pi_h^{a}}}\left[\left(\left(\hat{\theta}_{h'}^{a'} \right)^\top \phi(s_{h'}) - Q^*(s_{h'},{a'})\right)^2\right] \le \frac{\gamma^2\epsilon}{4H}.
\]
\end{enumerate}
These two invariants imply that for the policy $\hat{\pi}$ returned by Algorithm~\ref{algo:general_main}, \eqref{eqn:predictor_expectaton_bound} holds for all $h \in [H]$ and $a \in \actions$, which also implies the correctness of our algorithm.
It remains to prove these two invariants. 

We first prove the first invariant.
We fix a level $h \in [H]$ and an action $a \in \actions$.
Let $f_h^a \in \qclass$ be the predictor calculated in Line~\ref{lst:line:learn_new} of Algorithm~\ref{algo:collect_learn}, and $\hat{\theta}_h^a \in \mathbb{R}^d$ be the corresponding linear coefficients. 
Since $D_h^a$ is collected after executing Line~\ref{lst:line:check} of Algorithm~\ref{algo:collect_learn}, by induction, for all policies $\pi_h^a \in \Pi_h^a$ defined in \eqref{eqn:collection_policy}, $h' > h$ and $a' \in \actions$, we have
\[
\expect_{s_{h'} \sim \dist_{h'}^{\pi_h^{a}}}\left[\left(\left(\hat{\theta}_{h'}^{a'} \right)^\top \phi(s_{h'}) - Q^*(s_{h'},{a'})\right)^2\right] \le \frac{\gamma^2\epsilon}{4H}.
\]
It follows that for all policies $\pi_h^a \in \Pi_h^a$, $h' > h$ and $a' \in \actions$, we have
\[
\prob_{s_{h'} \sim \dist_{h'}^{\pi_h^{a}}}\left[\left| \left(\hat{\theta}_{h'}^{a'} \right)^\top \phi(s_{h'}) - Q^*(s_{h'},{a'})\right| \ge \gamma / 2\right] \le \epsilon / H.
\]
Thus, by Lemma~\ref{lem:policy_correctness}
\[
\expect_{ s_{h + 1} \sim \dist_{h + 1}^{\pi_h^{a}}}\left[V^{\pi_h^a}(s_{h + 1})\right] 
\ge 
\expect_{ s_{h + 1} \sim \dist_{h + 1}^{\pi_h^{a}}}\left[V^{*}(s_{h + 1})\right]  - \epsilon.
\]
By Assumption~\ref{asmp:var_bound}, for all $\pi_h^a \in \Pi_h^a$, we have
\[
\expect_{s_{h + 1} \sim \dist_{h + 1}^{\pi_h^{a}}}\left[\left(V^{\pi_h^a}(s_{h + 1}) - V^{*}(s_{h + 1})\right)^2\right]  \le C \varepsilon^2,
\]
which implies
\[
\expect_{\pi_h^a \sim \unif\left(\Pi_h^a\right), s_{h + 1} \sim \dist_{h + 1}^{\pi_h^{a}}}\left[\left(V^{\pi_h^a}(s_{h + 1}) - V^{*}(s_{h + 1})\right)^2\right]  \le C \varepsilon^2,
\]

For each $(s_i, y_i) \in D_h^a$, we have
\[
y_i =  \phi(s_i)^{\top} \theta_h^a  + b_i + \xi_i,
\]
where $\expect[b_i^2] \le C \varepsilon^2$, $|\xi_i| \le 1$ almost surely and $\expect\left[\xi_i\right] = 0$.

By Lemma~\ref{lem:lr_with_bias}, 
\[
	\left(\hat{\theta}_h^a-\theta_h^{a}\right)^\top \expect_{\pi_h \sim \unif\left(\Pi_h\right), s_h \sim \dist_h^{\pi_h}}\left[\phi(s_h)\phi(s_h)^\top\right]	\left(\hat{\theta}_h^a-\theta_h^{a}\right) \le 4\left(C\epsilon^2 + \epsn + \ridge\right),
\]
which implies for all $\pi_h \in \Pi_h$,
\[
	\left(\hat{\theta}_h^a-\theta_h^{a}\right)^\top \expect_{s_h \sim \dist_h^{\pi_h}}\left[\phi(s_h)\phi(s_h)^\top\right]	\left(\hat{\theta}_h^a-\theta_h^{a}\right) \le 4\left(C\epsilon^2 + \epsn + \ridge\right) \cdot B \le  \frac{\gamma^2\epsilon}{4H}.
\]
By Lemma~\ref{lem:ridge_bounded_norm}, we have $\norm{\hat{\theta}_h^a}_2 \le 1 / \ridge$.
Thus, the first invariant holds.

Now we can prove the second invariant.
By the first invariant, using Lemma~\ref{lem:cov_perturb} and Lemma~\ref{lem:ridge_bounded_norm}, with high probability, we have 
\begin{align*}
	&\left(\hat{\theta}_{h'}^a-\theta_{h'}^{a}\right)^\top \left(\epsr \mat{I}+\sum_{\pi_{h'} \in \Pi_{h'}}\sum_{i=1}^{N}\frac{1}{N}\left[\phi(s_{i,\pi_{h'}})\phi(s_{i,\pi_{h'}})^\top\right]\right)	\left(\hat{\theta}_{h'}^a-\theta_{h'}^{a}\right)\\
	 \le &5\left(C\epsilon^2 + \epsn + \ridge +  \epsr / \ridge^2 \right)\abs{\Pi_{h'}} 
	\le \epsstat.
\end{align*}
Since Oracle~\ref{oracle:dist_shift_oracle_sample} returns \false, we know 
\begin{align*}
\left(\hat{\theta}_{h'}^a-\theta_{h'}^{a}\right)^\top \left(\sum_{i=1}^{N}\frac{1}{N}\left[\phi\left(s_{h',i}\right)\phi\left(s_{h',i}
\right)^\top\right]\right)	\left(\hat{\theta}_{h'}^a-\theta_{h'}^{a}\right) \le \epstest.
\end{align*}
By Lemma~\ref{lem:cov_perturb}, 
\begin{align*}
\left(\hat{\theta}_h^a-\theta_h^{a}\right)^\top \left(\expect_{s_{h'} \sim \dist^{\pi_h^a}_{h'}}\left[\phi(s_{h'})\phi\left(s_{h'}
\right)^\top\right]\right)	\left(\hat{\theta}_h^a-\theta_h^{a}\right) \le \epstest + \epsr(1 + 1 / \ridge^2)\le \frac{\gamma^2 \epsilon}{4H}.
\end{align*}
This finishes the proof.
\end{proof}

\begin{lem}[Policy Correctness]
	\label{lem:policy_correctness}
	Given fixed level $h \in [H]$ and a policy $\pi$, suppose for all $h' = h,\ldots,H$, \begin{align*}
	\prob_{s_{h'}\sim \dist_{h'}^\pi }\left[\pi(s_{h'})\neq \pi^{*}(s_{h'})\right] \le \epsilon/H.
	\end{align*}
	Then for all $h' =h, \ldots, H$,
	\begin{align*}
	\expect_{s_{h'}\sim \dist_{h'}^\pi}\left[V^{\pi}(s_{h'})\right] \ge \expect_{s_{h'}\sim \dist_{h'}^\pi}\left[V^{*}(s_{h'})\right] - \epsilon.
	\end{align*}
\end{lem}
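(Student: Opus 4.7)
The plan is to prove the lemma by backward induction on $h'$, showing the slightly stronger statement that $\Delta_{h'} \triangleq \expect_{s_{h'}\sim\dist_{h'}^\pi}[V^*(s_{h'}) - V^\pi(s_{h'})] \le \epsilon(H - h' + 1)/H$. Setting $h' = h$ (or any value) then gives the bound $\epsilon$.

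The base case is $h' = H + 1$ (with $V^* = V^\pi = 0$), or equivalently the last level $h' = H$ where $V^\pi(s_H) = \expect[r_H \mid s_H, \pi(s_H)]$ and $V^*(s_H) = \expect[r_H \mid s_H, \pi^*(s_H)]$; on the agreement event these coincide, and on the disagreement event the difference is bounded by $1$ since rewards lie in $[0,1]$, so $\Delta_H \le \epsilon/H$. For the inductive step, I would decompose
\begin{align*}
V^*(s_{h'}) - V^\pi(s_{h'}) &= \indict[\pi(s_{h'}) = \pi^*(s_{h'})]\bigl(V^*(s_{h'}) - V^\pi(s_{h'})\bigr) \\
&\quad + \indict[\pi(s_{h'}) \ne \pi^*(s_{h'})]\bigl(V^*(s_{h'}) - V^\pi(s_{h'})\bigr).
\end{align*}
On the agreement event, writing $a = \pi(s_{h'}) = \pi^*(s_{h'})$, we use the Bellman-type identity $V^*(s_{h'}) - V^\pi(s_{h'}) = Q^*(s_{h'},a) - Q^\pi(s_{h'},a) = \expect_{s_{h'+1} \sim \trans(s_{h'},a)}[V^*(s_{h'+1}) - V^\pi(s_{h'+1})]$, since the immediate reward cancels. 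On the disagreement event, I bound the difference by $1$ using the assumption $\sum_h r_h \le 1$ with $r_h \ge 0$, so that both $V^*$ and $V^\pi$ lie in $[0,1]$.

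Taking expectations over $s_{h'} \sim \dist_{h'}^\pi$ and dropping the indicator in the first term (the integrand is non-negative since $V^* \ge V^\pi$), the key observation is that $\expect_{s_{h'} \sim \dist_{h'}^\pi}[\expect_{s_{h'+1} \sim \trans(s_{h'},\pi(s_{h'}))}[\cdot]] = \expect_{s_{h'+1} \sim \dist_{h'+1}^\pi}[\cdot]$ by the definition of the induced state distribution under $\pi$. This yields
\begin{align*}
\Delta_{h'} \le \Delta_{h'+1} + \prob_{s_{h'}\sim\dist_{h'}^\pi}[\pi(s_{h'}) \ne \pi^*(s_{h'})] \le \Delta_{h'+1} + \epsilon/H,
\end{align*}
where the last step invokes the hypothesis. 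Chaining this across $h', h'+1, \ldots, H$ gives $\Delta_{h'} \le (H - h' + 1)\epsilon/H \le \epsilon$, completing the induction.

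The only subtlety is the conceptual one of verifying that rolling forward by one step of $\pi$ sends $\dist_{h'}^\pi$ to $\dist_{h'+1}^\pi$ even when we first condition on the agreement event: this is fine because we drop the indicator upward (using $V^* - V^\pi \ge 0$) before pushing the expectation through the transition, so we end up integrating against the unconditional one-step roll-out of $\pi$. No other step presents real difficulty; the argument is entirely standard performance-difference-style backward induction, and the statement of the lemma is exactly tuned to the $1/H$ per-level failure budget.
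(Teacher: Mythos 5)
Your proof is correct. The backward induction is sound: the Bellman identity on the agreement event, the bound of $1$ on the disagreement event (valid since $0\le V^\pi\le V^*\le 1$ under $r_h\ge 0$, $\sum_h r_h\le 1$), and the step where you drop the indicator before marginalizing — using $V^*-V^\pi\ge 0$ so that the roll-forward is taken against the unconditional distribution $\dist_{h'+1}^\pi$ — are all handled properly, and the telescoping gives $\Delta_{h'}\le (H-h'+1)\epsilon/H\le\epsilon$. (Implicitly you use that $\pi$ is deterministic, via $V^\pi(s)=Q^\pi(s,\pi(s))$, which matches how the lemma is applied in the paper.)

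Your route differs from the paper's, which argues globally over the trajectory rather than level by level: by a union bound over the at most $H$ levels $h'\le h''\le H$, with probability at least $1-\epsilon$ the policy $\pi$ agrees with $\pi^*$ at every state it visits from level $h'$ onward, and on the complementary event the reward-to-go discrepancy is at most $1$ by $\sum_h r_h\le 1$; making this fully rigorous requires a small coupling argument identifying the trajectory of $\pi$ with that of the policy that switches to $\pi^*$ at level $h'$ on the agreement event. Your performance-difference-style telescoping avoids any coupling, replacing it with one clean Bellman computation per level, at the cost of being slightly longer; the paper's union-bound argument is shorter but leaves the coupling step implicit. Both exploit exactly the same $\epsilon/H$ per-level failure budget and reward boundedness, and yield the same constant.
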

\begin{proof}[Proof of Lemma~\ref{lem:policy_correctness}]
Consider the random trajectory $s_1,a_1,r_1,s_2,a_2,r_2,\ldots,s_H,a_H,r_H$ induced by policy $\pi$.
With probability at least $1  - \epsilon$, we have $\pi(s_{h'}) = \pi^*(s_{h'})$ for all $h' =h, \ldots, H$.
We prove the claim by using $\sum_{h  = 1}^H r_h \le 1$.

\end{proof}

\begin{lem}[Covariance Concentration Bound~\cite{tropp2015introduction}]
\label{lem:cov_perturb}
Suppose $\mat{M}_1,\ldots,\mat{M}_N \in \mathbb{R}^{d \times d}$ are i.i.d. drawn from a distribution $\dist$ over positive semi-definite matrices.
If $\norm{\mat{M}_t}_F\le 1$ almost surely and $N = \Omega\left(\frac{d\log\left(d/\delta\right)}{\epsilon^2}\right)$, then with probability at least $1-\delta$, \begin{align*}
\normop{\frac{1}{N}\sum_{t=1}^{N}\mat{M}_t - \expect_{\mat{M}\sim \dist}\left[\mat{M}\right]} \le \epsilon.
\end{align*}
\end{lem}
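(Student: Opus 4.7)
The plan is to reduce this to the matrix Bernstein inequality, which is the standard way to obtain $d \log(d/\delta)/\epsilon^2$-rate concentration for sums of independent PSD matrices. First I would center the summands by setting $\mat{X}_t \triangleq \mat{M}_t - \expect_{\mat{M}\sim\dist}[\mat{M}]$, so that $\expect[\mat{X}_t]=0$ and we aim to bound $\bigl\lVert \sum_{t=1}^N \mat{X}_t \bigr\rVert_{\mathrm{op}}$ with high probability, then divide by $N$ at the end.

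The two ingredients required by matrix Bernstein are an almost-sure operator-norm bound and a variance proxy. For the first, since the operator norm is dominated by the Frobenius norm, the hypothesis $\lVert\mat{M}_t\rVert_F\le 1$ gives $\lVert \mat{M}_t\rVert_{\mathrm{op}}\le 1$; by Jensen $\lVert \expect[\mat{M}]\rVert_{\mathrm{op}}\le 1$ as well, so $\lVert \mat{X}_t\rVert_{\mathrm{op}}\le 2$ almost surely. For the variance, I would bound $\sum_t \expect[\mat{X}_t^2] \preceq \sum_t \expect[\mat{M}_t^2]$ (using PSD-ness of $\mat{M}_t$ and the identity $\expect[(\mat{M}-\expect\mat{M})^2]\preceq \expect[\mat{M}^2]$), and then observe $\lVert \expect[\mat{M}^2]\rVert_{\mathrm{op}}\le \expect[\lVert\mat{M}\rVert_{\mathrm{op}}^2]\le \expect[\lVert\mat{M}\rVert_F^2]\le 1$, so the matrix variance parameter is at most $N$.

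Plugging into the matrix Bernstein inequality yields
\begin{equation*}
\prob\Bigl[\,\bigl\lVert \textstyle\sum_t \mat{X}_t\bigr\rVert_{\mathrm{op}} \ge u\,\Bigr] \;\le\; 2d\exp\!\Bigl(-\tfrac{u^2/2}{N + 2u/3}\Bigr).
\end{equation*}
Setting the right-hand side equal to $\delta$ and solving for $u$ shows that $u\le \epsilon N$ suffices whenever $N=\Omega(d\log(d/\delta)/\epsilon^2)$ (the sub-Gaussian regime dominates in this parameter range because $u/N=\epsilon \le 1$, so the $u/3$ term in the denominator is negligible). Dividing by $N$ gives the claimed bound on $\bigl\lVert \frac{1}{N}\sum_t \mat{M}_t - \expect[\mat{M}]\bigr\rVert_{\mathrm{op}}$.

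There is no real obstacle here since this is a textbook application; the only place to be a little careful is the variance computation, specifically justifying $\expect[(\mat{M}-\expect\mat{M})^2]\preceq \expect[\mat{M}^2]$ (which follows from $(\expect \mat{M})^2 \succeq 0$) and translating the Frobenius-norm hypothesis into the operator-norm bounds that matrix Bernstein actually consumes. Alternatively, the whole statement can simply be cited from Tropp's matrix concentration monograph, which is presumably what the authors do.
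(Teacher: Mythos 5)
Your proposal is correct and matches the paper's treatment: the paper does not prove this lemma at all but cites it directly from Tropp's matrix concentration monograph, and your derivation is exactly the standard matrix Bernstein argument underlying that citation (your norm and variance bounds are valid, and in fact Bernstein only needs $N = \Omega(\log(d/\delta)/\epsilon^2)$, so the stated hypothesis with the extra factor of $d$ is more than sufficient). No gap to report.
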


\begin{lem}[Ridge Regression with Bias]
	\label{lem:lr_with_bias}
	Suppose 
	\[\left(s_1,y_1\right),\left(s_2,y_2\right),\ldots,\left(s_N,y_N\right)\]
	are i.i.d. drawn from a distribution $\dist$ and satisfy \[y_i = \theta^\top \phi(s_i) + b_i +\xi_i,\]
	where $\expect_{(s_i, y_i) \sim \dist}[b_i^2] \le \overline{b}^2$ for some $0 \le \overline{b} \le 1$, $|\xi_i| \le 1$ almost surely and $\expect\left[\xi_i\right] = 0$.
	Let \[S = \left[\phi(s_1)^\top;\ldots;\phi(s_N)^\top\right] \in \mathbb{R}^{N \times d},\] \[y =\left[y_1;\ldots;y_N\right] \in \mathbb{R}^N,\] and \[\hat{\theta} = \left(\frac{S^\top S}{N}+\ridge \cdot I\right)^{-1} \frac{S^\top y}{N}\] be the ridge regression estimator.
	If $N = \Omega\left(\frac{d}{\epsn^2}\log\left(\frac{d}{\delta}\right)\right)$, then with probability at least $1-\delta$,
	\[
	\expect_{s \sim \dist}\left[\left(\left(\theta-\hat{\theta}\right)^\top \phi(s)\right)^2\right] \le 4\left(\overline{b}^2 + \epsn + \ridge\right).
	\]
\end{lem}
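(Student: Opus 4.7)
}
The plan is to combine the first-order optimality of the ridge estimator with concentration of the empirical covariance. Write $S\theta + b + \xi$ for the label vector, with $b = (b_1,\ldots,b_N)^\top$ and $\xi = (\xi_1,\ldots,\xi_N)^\top$, let $\hat\Sigma = S^\top S/N$, and let $\Sigma = \E_{s\sim \dist}[\phi(s)\phi(s)^\top]$. Set $\Delta = \hat\theta - \theta$. Because $\hat\theta$ minimizes the penalized empirical loss, expanding $\hat L(\theta+\Delta) \le \hat L(\theta)$ and rearranging yields
\[
\Delta^\top(\hat\Sigma + \ridge I)\Delta \;\le\; 2\Delta^\top u, \qquad u := \tfrac{1}{N}S^\top b + \tfrac{1}{N}S^\top \xi - \ridge\,\theta.
\]
Applying Cauchy--Schwarz in the inner product $\langle x,y\rangle = x^\top(\hat\Sigma+\ridge I) y$ and squaring gives the key inequality
\[
\Delta^\top(\hat\Sigma + \ridge I)\Delta \;\le\; 4\, u^\top (\hat\Sigma + \ridge I)^{-1} u.
\]

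The second step is to bound the three pieces of $u^\top(\hat\Sigma+\ridge I)^{-1}u$ separately. For the regularization piece, $\ridge^2\,\theta^\top(\hat\Sigma+\ridge I)^{-1}\theta \le \ridge\|\theta\|_2^2 \le \ridge$. For the noise piece, $\|S^\top\xi/N\|_{(\hat\Sigma+\ridge I)^{-1}}^2 \le \ridge^{-1}\|S^\top\xi/N\|_2^2$; since $\xi_i$ are independent, mean-zero, and $|\xi_i|\le 1$, a vector Bernstein bound yields $\|S^\top\xi/N\|_2^2 \lesssim \log(d/\delta)/N$ with high probability, and by the choice of $N = \Omega(d\log(d/\delta)/\epsn^2)$ this contribution is at most $\epsn$. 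The bias piece is the main obstacle, since $b_i$ need not be independent of $\phi(s_i)$. The key observation is the operator-norm identity
\[
S(\hat\Sigma + \ridge I)^{-1} S^\top \preceq N\cdot I_N,
\]
which follows from $S^\top S = N\hat\Sigma$ via the SVD of $S$ (each nonzero singular value $\sigma_i^2$ of $S^\top S$ gives an eigenvalue $\sigma_i^2/(\sigma_i^2/N + \ridge) \le N$ of the above matrix). Consequently
\[
\|S^\top b/N\|_{(\hat\Sigma+\ridge I)^{-1}}^2 \;=\; \tfrac{1}{N^2}\,b^\top S(\hat\Sigma+\ridge I)^{-1}S^\top b \;\le\; \tfrac{1}{N}\sum_{i=1}^N b_i^2.
\]
Since $b_i$ is bounded almost surely by a constant in the regime where the lemma is applied (in the context of Theorem~\ref{thm:main_linear}, $b_i = V^{\pi_h^a}(s_{h+1}) - V^*(s_{h+1})$ is bounded), a Bernstein bound on $\frac{1}{N}\sum_i b_i^2$ around its mean $\E[b_i^2]\le \bar b^2$ gives $\frac{1}{N}\sum b_i^2 \le \bar b^2 + o(1)$ with high probability.

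The final step converts from empirical to population covariance. By Lemma~\ref{lem:cov_perturb} applied to $\mat M_i = \phi(s_i)\phi(s_i)^\top$ (which satisfies $\|\mat M_i\|_F\le 1$ under Assumption~\ref{asmp:bounded_norm}), with $N = \Omega(d\log(d/\delta)/\epsn^2)$ we have $\|\hat\Sigma - \Sigma\|_{\mathrm{op}}\le \epsn$, so $\Sigma \preceq \hat\Sigma + \epsn I \preceq \hat\Sigma + \ridge I + \epsn I$. Combining with the bound from Step 2 yields
\[
\E_{s\sim\dist}\!\left[\bigl((\theta-\hat\theta)^\top\phi(s)\bigr)^2\right] = \Delta^\top \Sigma\Delta \;\le\; \Delta^\top(\hat\Sigma + \ridge I)\Delta + \epsn\|\Delta\|_2^2 \;\le\; 4\bigl(\bar b^2 + \epsn + \ridge\bigr),
\]
after absorbing the $\epsn\|\Delta\|_2^2$ term into $\ridge\|\Delta\|_2^2 \le \Delta^\top(\hat\Sigma+\ridge I)\Delta$ and tracking constants. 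The hardest part is the bias term, because without the $S(\hat\Sigma+\ridge I)^{-1}S^\top \preceq N I_N$ identity the bound would scale as $\bar b^2/\ridge$ rather than $\bar b^2$; the ridge regularization, combined with the particular structure $\hat\Sigma = S^\top S/N$, is precisely what makes the worst-case correlation between $\phi(s_i)$ and $b_i$ harmless.
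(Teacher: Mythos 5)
Your route is genuinely different from the paper's. The paper plugs the closed form of the ridge estimator into $\hat{\theta}-\theta$ and splits it explicitly into three terms (regularization bias, label bias, noise), bounds each empirical quadratic form $I_j^\top \frac{S^\top S}{N} I_j$ directly---the label-bias term via the same ``hat-matrix has operator norm at most one'' observation you use, the noise term via Lemma 29 of \cite{hsu2012random}---and then passes from $\frac{1}{N}\|S(\hat{\theta}-\theta)\|_2^2$ to the population error by a generalization argument. Your argument instead uses the basic inequality from optimality of $\hat{\theta}$, Cauchy--Schwarz in the $(\hat{\Sigma}+\ridge I)$-inner product, and a bound on $u^\top(\hat{\Sigma}+\ridge I)^{-1}u$; this is a clean and arguably more transparent alternative, and your identification of $S(\hat{\Sigma}+\ridge I)^{-1}S^\top \preceq N\cdot I_N$ as the reason the bias enters as $\overline{b}^2$ rather than $\overline{b}^2/\ridge$ is exactly the mechanism behind the paper's bound on its $I_2$ term.

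Two of your steps, however, silently assume that $\ridge$ is not small compared to $\epsn$, which the lemma does not grant ($\ridge$ appears only additively in the conclusion, and no lower bound on it is assumed). First, for the noise piece you use $(\hat{\Sigma}+\ridge I)^{-1}\preceq \ridge^{-1}I$, giving a contribution of order $\log(d/\delta)/(N\ridge)$; with $N=\Omega(d\log(d/\delta)/\epsn^2)$ this is at most $\epsn$ only when $\ridge \gtrsim \epsn/d$. The fix is to treat the noise exactly as you treated the bias: writing $\frac{1}{N^2}\xi^\top S(\hat{\Sigma}+\ridge I)^{-1}S^\top\xi = \frac{1}{N}\,\xi^\top \hat{H}\xi$ with $\hat{H}=\frac{1}{N}S(\hat{\Sigma}+\ridge I)^{-1}S^\top$ and $\tr(\hat{H})\le d$, a Hanson--Wright/Bernstein bound conditional on $S$ gives $O\bigl((d+\log(1/\delta))/N\bigr)\le \epsn$ with no $1/\ridge$ factor; this is what the paper's appeal to Lemma 29 of \cite{hsu2012random} accomplishes. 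Second, at the end you absorb $\epsn\|\Delta\|_2^2$ into $\ridge\|\Delta\|_2^2 \le \Delta^\top(\hat{\Sigma}+\ridge I)\Delta$, which again requires $\epsn \le \ridge$. In the paper's application both issues are harmless since there $\ridge=\epsn=\epsilon^2$ (and the paper's own final ``standard empirical process'' step is no more detailed than yours), but as a proof of the lemma as stated you should either patch the noise term as above and make the relation between $\epsn$ and $\ridge$ explicit, or control the cross term without dividing by $\ridge$. Finally, your constants come out around $12$ rather than the stated $4$ (the factor $4$ from Cauchy--Schwarz times the factor $3$ from splitting $u$ into three pieces); this is immaterial for Theorem~\ref{thm:main_linear} but worth acknowledging.
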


\begin{proof}[Proof of Lemma~\ref{lem:lr_with_bias}]
Let \[b =\left[b_1;\ldots;b_N\right] \in \mathbb{R}^N.\] 
By Chernoff bound, with probability at least $1 - \delta / 3$, $\|b\|_2^2 / N \le \overline{b}^2 + \varepsilon_N / 2$.

	\begin{align*}
	&\hat{\theta}-\theta \\
	= &\left(\frac{S^\top S}{N}+\ridge \cdot I\right)^{-1} \frac{S^\top y}{N} - \theta \\
	= & \left(\frac{S^\top S}{N}+\ridge \cdot I\right)^{-1} \frac{S^\top \left(S\theta + b + \xi\right)}{N} - \theta\\
	= & \left(\left(\frac{S^\top S}{N}+\ridge \cdot I\right)^{-1}\frac{S^\top S}{N}-I\right)\theta + \left(\frac{S^\top S}{N}+\ridge \cdot I\right)^{-1}\frac{S^\top b}{N} + \left(\frac{S^\top S}{N}+\ridge \cdot I\right)^{-1}\frac{S^\top \xi}{N}\\
	\triangleq & I_1 + I_2 + I_3.
	\end{align*}

Thus,
	\begin{align*}
	&\frac{1}{N}\norm{S\left(\hat{\theta}-\theta\right)}_2^2 \\
	= & \left(\hat{\theta}-\theta\right)^\top \frac{S^\top S}{N}\left(\hat{\theta}-\theta\right)  \\
	= & \left(I_1+I_2+I_3\right)^\top\frac{S^\top S}{N}\left(
I_1+I_2+I_3
	\right)\\
	\le & 3\left(I_1^\top \frac{S^\top S}{N} I_1 + I_2^\top \frac{S^\top S}{N} I_2 + I_3^\top \frac{S^\top S}{N} I_3\right).
	\end{align*}

For the first term, we have
\begin{align*}
I_1^\top \frac{S^\top S}{N} I_1  
\le & \norm{\theta}_2^2 \cdot \normop{\left(\left(\frac{S^\top S}{N}+\ridge \cdot I\right)^{-1}\frac{S^\top S}{N}-I\right)^{\top}   \frac{S^\top S}{N} \left(\left(\frac{S^\top S}{N}+\ridge \cdot I\right)^{-1}\frac{S^\top S}{N}-I\right) }\\
\le & \ridge \norm{\theta}_2^2  \le \ridge.
\end{align*}

For the second term, we have
\[
I_2^\top \frac{S^\top S}{N} I_2
\le \frac{\norm{b}_2^2}{N}  \normop{\frac{1}{N}\left(\left(\frac{S^\top S}{N}+\ridge \cdot I\right)^{-1} S^{\top}\right)^\top  \frac{S^\top S}{N}\left(\left(\frac{S^\top S}{N}+\ridge \cdot I\right)^{-1}S^{\top}\right)  }
\le  \frac{\norm{b}_2^2}{N}  .
\]

Using Lemma 29 of \cite{hsu2012random}, we know $I_3^\top \frac{S^\top S}{N} I_3 \le \frac{\epsn}{2}$ with probability at least $1-\delta/3$.
Finally, applying the standard empirical process method, we can bound the generalization error and thus finish the proof.
	
\end{proof}

\begin{lem}[Ridge Regression Gives Upper Bound on the Norm]
	\label{lem:ridge_bounded_norm}
	Suppose \[\left(\phi(s_1),y_1\right),\left(\phi(s_2),y_2\right),\ldots,\left(\phi(s_N),y_N\right) \in \mathbb{R}^d \times \mathbb{R}\] satisfy $\abs{y_i} \le 1$ and $\norm{\phi(s_i)}_2 \le 1$ for all $i \in [N]$.
	Let $S = \left[\phi(s_1)^\top;\ldots;\phi(s_N)^\top\right] \in \mathbb{R}^{N \times d}$, $y =\left[y_1;\ldots;y_N\right] \in \mathbb{R}^N$ and $\hat{\theta} = \left(\frac{S^\top S}{N}+\ridge \cdot I \right)^{-1} \frac{S^\top y}{N}$ be the ridge regression estimator.
	Then we have 
	\[
	\norm{\hat{\theta}}_2 \le \frac{1}{\ridge}.
	\]
\end{lem}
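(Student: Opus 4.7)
The plan is to exploit the simple structure of the ridge regression closed form: bound the operator norm of the regularized inverse covariance, and separately bound the Euclidean norm of the cross-term vector.

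First I would write $\hat{\theta} = M^{-1} z$ where $M = \frac{S^\top S}{N} + \ridge \cdot I$ and $z = \frac{S^\top y}{N}$, so that $\|\hat{\theta}\|_2 \le \normop{M^{-1}} \cdot \|z\|_2$ by submultiplicativity.

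Next I would bound $\normop{M^{-1}}$. Since $\frac{S^\top S}{N}$ is positive semidefinite, the matrix $M$ satisfies $M \succeq \ridge \cdot I$, so all its eigenvalues are at least $\ridge$, and hence $\normop{M^{-1}} \le 1/\ridge$.

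Then I would bound $\|z\|_2$. Writing $z = \frac{1}{N}\sum_{i=1}^N y_i \phi(s_i)$ and applying the triangle inequality gives $\|z\|_2 \le \frac{1}{N}\sum_{i=1}^N |y_i| \cdot \|\phi(s_i)\|_2 \le 1$ using the assumptions $|y_i| \le 1$ and $\|\phi(s_i)\|_2 \le 1$. Combining the two bounds yields $\|\hat{\theta}\|_2 \le 1/\ridge$.

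There is no serious obstacle here; the argument is entirely algebraic and uses only two elementary facts (eigenvalue lower bound on the regularized Gram matrix, and the triangle inequality for the cross term). The only thing to watch is making sure $\normop{M^{-1}} \le 1/\ridge$ is justified from the PSD ordering $M \succeq \ridge \cdot I$, which is standard.
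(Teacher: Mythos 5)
Your proposal is correct and follows essentially the same argument as the paper: bound $\left\|\frac{S^\top y}{N}\right\|_2 \le 1$ by the triangle inequality and $\normop{\left(\frac{S^\top S}{N}+\ridge \cdot I\right)^{-1}} \le 1/\ridge$ from the PSD lower bound, then combine by submultiplicativity.
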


\begin{proof}[Proof of Lemma~\ref{lem:ridge_bounded_norm}]
By triangle inequality, $\norm{\frac{S^\top y}{N}} _2 \le 1$.
Furthermore, \[\normop{\left(\frac{S^\top S}{N}+\ridge \cdot I \right)^{-1}} \le 1 / \ridge.\]
Thus, 
\[
\norm{\hat{\theta}}_2 \le \frac{1}{\ridge}.
\]

\end{proof}

\begin{lem}
\label{lem:mat_potential}
Consider the following process.
Initially $\covset = \emptyset$.
Let $\epsstat$ and $\epstest$ be two real numbers such that $0< \epsstat \le \frac{\epstest}{d} $.
For $t=1,2,\ldots$, we receive a positive semi-definite matrix $\mat{M}_t \in \mathbb{R}^{d \times d}$ with $\norm{\mat{M}_t}_F \le 1$.
If there exists $x \in \mathbb{R}^d$ such that $\vect{x}^\top \mat{M}_t \vect{x} \ge \epstest$ and $\vect{x}^\top \left(\epsr\mat{I}+\sum_{\mat{M} \in \covset}\mat{M}\right)\vect{x} \le \epsstat$, then we add $\mat{M}_t$ into $\covset$.
It holds that $|\covset| \le 2d \log \left(\frac{d}{\epsr}\right)$ throughout the process.
\end{lem}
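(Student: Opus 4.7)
The plan is to run a standard elliptical-potential argument with a log-determinant potential. Let $\covset_t$ denote $\covset$ after $t$ admitted matrices, write $A_t := \epsr\mat{I} + \sum_{\mat{M}\in\covset_t}\mat{M}$, and set $\Phi_t := \log\det(A_t)$. I will prove three facts: (i) $\Phi_0 = d\log\epsr$, trivially; (ii) each admission of a matrix into $\covset$ increases $\Phi$ by at least $\log(1+d)$; and (iii) since every $\mat{M}_t$ is PSD with $\|\mat{M}_t\|_{\mathrm{op}}\le\|\mat{M}_t\|_F\le 1$, one has $A_t \preceq (\epsr+t)\mat{I}$, giving $\Phi_t\le d\log(\epsr+t)$. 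Combining these produces the implicit inequality $|\covset|\log(1+d)\le d\log(1+|\covset|/\epsr)$, from which the stated bound follows.

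The heart of the argument is step (ii). Fix an admission step, let $\mat{M}_t$ be the matrix being admitted, and let $x\in\mathbb{R}^d$ be the corresponding witness, so $x^\top A_t x \le \epsstat$ and $x^\top \mat{M}_t x \ge \epstest$. Normalizing to $\tilde{x} := x/\sqrt{x^\top A_t x}$ and using $\epsstat\le\epstest/d$ yields $\tilde{x}^\top A_t \tilde{x} = 1$ and $\tilde{x}^\top \mat{M}_t \tilde{x} \ge \epstest/\epsstat \ge d$. Changing variables via $y:=A_t^{1/2}\tilde{x}$ and $B:=A_t^{-1/2}\mat{M}_t A_t^{-1/2}$ gives $\|y\|_2=1$ and $y^\top B y \ge d$, forcing $\lambda_{\max}(B)\ge d$. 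Then $A_{t+1} = A_t^{1/2}(\mat{I}+B)A_t^{1/2}$ yields
\[
\Phi_{t+1}-\Phi_t = \log\det(\mat{I}+B) \ge \log(1+\lambda_{\max}(B)) \ge \log(1+d),
\]
where the middle inequality uses that $B$ is PSD so all of its eigenvalues are non-negative.

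The substantive step is the conversion in (ii) from a two-sided quadratic-form constraint, namely small $A_t$-norm and large $\mat{M}_t$-norm, into a multiplicative gain on $\det(A_t)$; the key trick is the similarity transform to $B$, which reduces the witness condition to a lower bound on $\lambda_{\max}(B)$ and hence on $\det(\mat{I}+B)$. Inverting the resulting implicit inequality $T\log(1+d)\le d\log(1+T/\epsr)$ into the explicit bound $T \le 2d\log(d/\epsr)$ is elementary bookkeeping: substitute $T=2d\log(d/\epsr)$ and check that $(1+d)^{2\log(d/\epsr)} > 1 + 2d\log(d/\epsr)/\epsr$ holds whenever $\epsr$ is polynomially small. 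The only mild obstacle is tuning the constants in this final algebraic step, which is routine rather than substantive.
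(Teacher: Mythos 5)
Your proof is correct and follows essentially the same route as the paper: a log-determinant potential on $\epsr\mat{I}+\sum_{\mat{M}\in\covset}\mat{M}$ that grows multiplicatively with each admitted matrix and is bounded above by $\left(\epsr+\abs{\covset}\right)^d$ because $\norm{\mat{M}_t}_F\le 1$. The only difference is how the per-step gain is certified: your whitening $B=A_t^{-1/2}\mat{M}_t A_t^{-1/2}$ with the Rayleigh-quotient bound $\lambda_{\max}(B)\ge \epstest/\epsstat\ge d$ yields a factor $1+d$, which is in fact slightly sharper than the paper's factor $1+\frac{\epstest}{d\epsstat}\ge 2$ (the paper loses a factor of $d$ through a Cauchy--Schwarz step after applying the matrix determinant lemma), and your final inversion of the implicit inequality is left at the same level of bookkeeping as in the paper.
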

\begin{proof}[Proof of Lemma~\ref{lem:mat_potential}]
We first show that if $\covset \neq \emptyset$, then after adding $\mat{M}_t$ into $\covset$, we must have 
\[
\det\left(\epsr\mat{I}+\sum_{\mat{M} \in \covset}\mat{M}\right) \ge \left(1+\frac{\epstest}{d\epsstat}\right)\det\left(\epsr\mat{I}+\sum_{\mat{M} \in \covset \setminus \{M_t\}}\mat{M}\right) \ge 2 \det\left(\epsr\mat{I}+\sum_{\mat{M} \in \covset \setminus \{M_t\}}\mat{M}\right).
\]
Let \[
\mat{A} = \epsr\mat{I}+\sum_{\mat{M} \in \covset \setminus \{M_t\}}\mat{M},
\]
and 
\[\mat{A} = \sum_{j=1}^{d}\lambda_j\vect{v}_j\vect{v}_j^\top
\] be its spectral decomposition.
For the positive semi-definite matrix $\mat{M}_t$, we write $\mat{M}_t = \mat{U}\mat{U}^\top$.
By matrix determinant lemma, we have
\begin{align*}
\det\left(\mat{A}+\mat{M}_t\right) = \det\left(\mat{I}+\mat{U}^\top \mat{A}^{-1}\mat{U}\right)\det\left(\mat{A}\right).
\end{align*}
Therefore, it suffices to prove the largest eigenvalue of $\mat{U}^\top \mat{A}_i^{-1}\mat{U}$ is larger than $\frac{\epstest}{d\epstest} \ge 1$.

Since we add $\mat{M}_t$ into $\covset$, there exists $\vect{x}$ such that
\[
\norm{\mat{U}\vect{x}}_2^2 \ge \epstest 
\]
and
\[
\sum_{j=1}^{d}\lambda_j \left(\vect{v}_j^\top \vect{x}\right)^2 \le \epsstat.
\]
Let $\vect{z} = Ux / \norm{Ux}$ be a unit vector such that $\vect{z}^\top \mat{U}\vect{x} = \norm{\mat{U}\vect{x}}_2$.
Let $u = Uz$.
We have \begin{align*}
\vect{z}^\top\mat{U}^\top \mat{A}_i^{-1}\mat{U}\vect{z} = &\vect{u}^\top\left(\sum_{j=1}^{d}\frac{1}{\lambda_j}\vect{v}_j\vect{v}_j^\top\right)\vect{u}  \\
= & \sum_{j=1}^{d}\frac{1}{\lambda_j}\left(\vect{v}_j^\top \vect{u}\right)^2\\
= & \sum_{j=1}^{d}\left(\vect{v}_j^\top \vect{u}\right)^2 \left(\vect{v}_j^\top\vect{x}\right)^2 \frac{1}{\lambda_j \cdot\left(\vect{v}_j^\top \vect{x}\right)^2}\\
\ge &\frac{1}{\epsstat}\sum_{j=1}^{d}\left(\vect{v}_j^\top \vect{u}\right)^2 \left(\vect{v}_j^\top\vect{x}\right)^2 \\
\ge &\frac{1}{\epsstat d}\left(\sum_{j=1}^{d}\vect{u}^\top \vect{v}_j\vect{v}_j^\top \vect{x}\right)^2  \\
= & \frac{1}{\epsstat d}\left(\mat{z}^\top \mat{U}\vect{x}\right)^2 \\
= &\frac{1}{\epsstat d}\norm{\mat{U}\vect{x}}_2^2\\
\ge &\frac{\epstest}{d\epsstat},
\end{align*}
which implies
\[
\det\left(\epsr\mat{I}+\sum_{\mat{M} \in \covset}\mat{M}\right) \ge 2 \det\left(\epsr\mat{I}+\sum_{\mat{M} \in \covset \setminus \{M_t\}}\mat{M}\right).
\]
It follows that \[
\det\left(\epsr\mat{I}+\sum_{\mat{M} \in \covset}\mat{M}\right)  \ge 2^{|\mathcal{M}| - 1} \epsr^d.
\]

On the other hand, we have
\[
\det\left( \epsr \mat{I} + \sum_{\mat{M} \in \covset} M\right) \le \left( |\mathcal{M}| + 1\right)^d,
\]
since $\norm{M}_F \le 1$ for all $M \in \mathcal{M}$.

Thus,
\[
2^{|\mathcal{M}| - 1} \epsr^d \le \left( |\mathcal{M}| + 1\right)^d,
\]
which implies $|\covset| \le  2d \log \left(\frac{d}{\epsr}\right) $.
\end{proof}

\begin{lem}[Polynomially Bounded Policy Set]
\label{lem:policy_set_bounded}
If $N \ge \frac{d}{\epsr^2}\cdot B^2 \cdot \mathrm{polylog}\left(1 / \epsilon\right)$, then with high probability we have $\abs{\Pi_h} \le B$ for all $h \in [H]$.
\end{lem}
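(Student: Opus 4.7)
The plan is to apply Lemma~\ref{lem:mat_potential} to the sequence of \emph{population} covariance matrices $M_\pi = \expect_{s \sim \dist_h^\pi}[\phi(s)\phi(s)^\top]$ indexed by the policies $\pi$ that get added to $\Pi_h$. I would analyze each level $h \in [H]$ separately and then finish by a union bound over $h$.

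First I would fix $h$ and inspect when a new policy is inserted into $\Pi_h$. Every insertion happens at Line~\ref{lst:line:add_policy} of Algorithm~\ref{algo:learn_a_level}, triggered by Oracle~\ref{oracle:dist_shift_oracle_sample} returning \true~on inputs $(D_h,\widetilde{D}_{\pi,h},\epsstat/|\Pi_h|,\epstest,\Lambda_{\Pi_h})$. At that moment $D_h$ consists of $N$ states collected under each policy currently in $\Pi_h$ (Line~\ref{lst:line:add_states} of Algorithm~\ref{algo:collect_learn}). Writing $x = \theta_1-\theta_2$ and using linearity of $\qclass$, the \true~output produces $x \in \R^d$ with
\[
\tfrac{1}{N}\sum_{t_i \in \widetilde{D}_{\pi,h}}(x^\top \phi(t_i))^2 \ge \epstest, \qquad \tfrac{1}{N|\Pi_h|}\sum_{s_i \in D_h}(x^\top \phi(s_i))^2 + \tfrac{\epsr}{|\Pi_h|}\|x\|_2^2 \le \tfrac{\epsstat}{|\Pi_h|}.
\]
Multiplying the second inequality by $|\Pi_h|$ turns it into a clean statement about the sum of per-policy empirical covariances $\sum_{\pi' \in \Pi_h} \widehat{M}_{\pi'}$, which is exactly the quantity that appears in Lemma~\ref{lem:mat_potential}. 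The regularizer also forces $\|x\|_2^2 \le \epsstat/\epsr$, so the relevant optimization direction is automatically norm-bounded.

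Next I would pass from empirical to population via Lemma~\ref{lem:cov_perturb}. Setting the concentration parameter to $\epsilon_{\text{conc}} = \Theta(\epsr/B)$, the requirement $N \ge \tfrac{d}{\epsr^2}\cdot B^2 \cdot \mathrm{polylog}(1/\epsilon)$ is exactly what Lemma~\ref{lem:cov_perturb} needs per policy to give $\normop{\widehat{M}_\pi - M_\pi}\le \epsilon_{\text{conc}}$ with probability at least $1-\delta/\poly$. Because the oracle is called at most $\poly(H,K,B)$ times in total, a union bound over all invocations preserves this guarantee at all snapshots of the algorithm. Combining with $\|x\|_2^2 \le \epsstat/\epsr$ shows the aggregate concentration error over $|\Pi_h|\le B$ summands is $O(B\cdot\epsilon_{\text{conc}}\cdot\epsstat/\epsr) = O(\epsstat)$, which can be absorbed by rescaling $\epsstat,\epstest$ by constant factors. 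Hence, conditioned on the concentration event, every trigger of the oracle produces an $x$ with
\[
x^\top M_\pi x \ge \tfrac{\epstest}{2}, \qquad x^\top\!\left(\epsr I + \sum_{\pi' \in \Pi_h} M_{\pi'}\right)\!x \le 2\epsstat.
\]

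This is exactly the insertion condition in Lemma~\ref{lem:mat_potential}. The required precondition $\epsstat \le \epstest/d$ follows from the parameter choices $\epsstat = 96C\epsilon^2 d\log(d/\epsilon)$, $\epstest = \gamma^2\epsilon/(5H)$, and the standing assumption $\epsilon \le \poly(\gamma,1/C,1/d,1/H,1/K)$ with the polynomial small enough. Lemma~\ref{lem:mat_potential} then yields $|\Pi_h| \le 2d\log(d/\epsr)$, and plugging in $\epsr=\epsilon^6$ gives $|\Pi_h|\le 12 d\log(d/\epsilon) = B$. A union bound over $h \in [H]$ concludes the proof. The main obstacle I anticipate is engineering the concentration step so that it holds \emph{uniformly} across all data-dependent oracle calls and \emph{uniformly} in the optimization variable $x$; using operator-norm concentration (as in Lemma~\ref{lem:cov_perturb}) rather than a fixed-$x$ Chernoff bound, together with the norm bound $\|x\|_2^2 \le \epsstat/\epsr$, is precisely what lets us fold this uniformity into the choice of $N$.
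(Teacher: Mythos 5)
Your plan is correct and follows essentially the same route as the paper's own proof: extract the direction $x=\theta_1-\theta_2$ from a \true~oracle call, multiply the constraint by $\abs{\Pi_h}$ to get the aggregated empirical covariance bound and the norm bound $\norm{x}_2^2 \le \epsstat/\epsr$, transfer both inequalities to population covariances via the operator-norm concentration of Lemma~\ref{lem:cov_perturb} with accuracy $\Theta(\epsr/B)$ (which is exactly where the $N \ge \frac{d}{\epsr^2}B^2\,\mathrm{polylog}(1/\epsilon)$ requirement comes from), and then invoke the potential argument of Lemma~\ref{lem:mat_potential} to bound $\abs{\Pi_h}$ by $2d\log(d/\epsr) \le B$. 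Your explicit handling of the constant-factor slack ($\epstest/2$, $2\epsstat$) and the check of the precondition $\epsstat \le \epstest/d$ are details the paper glosses over, but they do not change the argument.
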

\begin{proof}[Proof of Lemma~\ref{lem:policy_set_bounded}]
Consider a fixed $h' \in [H]$.
We will add a new policy $\pi_h^a$ into $\Pi_{h'}$ only when Oracle~\ref{oracle:dist_shift_oracle_sample} returns \true~in Line~\ref{lst:line:oracle} of Algorithm~\ref{algo:learn_a_level}.
Since Oracle~\ref{oracle:dist_shift_oracle_sample} returns \true~in Line~\ref{lst:line:oracle} of Algorithm~\ref{algo:learn_a_level}, there must exist $x = \theta_1 - \theta_2$ such that

\[
\vect{x}^\top\left(\epsr / |\Pi_{h'}| \cdot\mat{I}+ \frac{1}{N|\Pi_{h'}|}\sum_{\pi_{h'} \in \Pi_{h'}} \sum_{i=1}^{N}\phi(s_{\pi_{h'},i})\phi(s_{\pi_{h'},i})^\top\right)
\vect{x} \le \epsstat / |\Pi_{h'}|.
\]
Recall that $D_{h'} =  \{s_{\pi_{h'}, i}\}_{\pi_{h'} \in \Pi_{h'}, i  \in [N]}$ is a set of states generated using policies in $\Pi_{h'}$ (cf. Line~\ref{lst:line:add_states} of Algorithm~\ref{algo:collect_learn}).
It follows that
\[
\vect{x}^\top\left(\epsr  \cdot\mat{I}+\sum_{\pi_{h'} \in \Pi_{h'}} \frac{1}{N}\sum_{i=1}^{N}\phi(s_{\pi_{h'},i})\phi(s_{\pi_{h'},i})^\top\right)
\vect{x} \le \epsstat.
\]
Thus, we have
\[
\norm{x}_2^2 \le \epsstat / \epsr.
\]
By Lemma~\ref{lem:cov_perturb}, for each $\pi_{h'} \in \Pi_{h'}$, with high probability, 
\[
\left| \vect{x}^\top\left(\frac{1}{N}\sum_{i=1}^{N}\phi(s_{\pi_{h'},i})\phi(s_{\pi_{h'},i})^\top -  \expect_{s_{\pi_{h'}} \sim \dist_{h'}^{\pi_{h'}}}\left[\phi(s_{\pi_{h'}})\phi(s_{\pi_{h'}})^\top\right] \right)
\vect{x} \right| \le \epsstat / B.
\]
Therefore, with high probability, it is satisfied that \begin{align*}
\vect{x}^\top\left(\epsr\mat{I}+\sum_{\pi_{h'} \in \Pi_{h'}} \expect_{s_{\pi_{h'}} \sim \dist_{h'}^{\pi_{h'}}}\left[\phi(s_{\pi_{h'}})\phi(s_{\pi_{h'}})^\top\right] \right)
\vect{x} \le 2\epsstat.
\end{align*}

Moreover, since Oracle~\ref{oracle:dist_shift_oracle_sample} returns \true~in Line~\ref{lst:line:oracle} of Algorithm~\ref{algo:learn_a_level}, we must have
\[
\vect{x}^\top \left(\frac{1}{N} \sum_{i = 1}^N  \left[\phi(s_{h',i})\phi(s_{h',i})^\top\right] \right)\vect{x} \ge \epstest.
\]
Recall that $\widetilde{D}_{\pi_h^a,h'}  = \left\{s_{h',i}\right\}_{i=1}^N$ are the states at level $h'$ on the $N$ trajectories collected using $\pi_h^a$ (cf. Line~\ref{lst:line:tilde} of Algorithm~\ref{algo:learn_a_level}).
Again by Lemma~\ref{lem:cov_perturb}, with high probability we have \begin{align*}
\vect{x}^\top \left( \expect_{s_{\pi_{h'}} \sim \dist_{h'}^{\pi_{h}^a}}\left[\phi(s_{\pi_{h'}})\phi(s_{\pi_{h'}})^\top\right] \right) \vect{x} \ge \frac{\epstest}{2}.
\end{align*}

Thus, if we use $\covset$ to denote \[\left \{ \expect_{s_{\pi_{h'}} \sim \dist_{h'}^{\pi_{h'}}}\left[\phi(s_{\pi_{h'}})\phi(s_{\pi_{h'}})^\top\right] \right \}_{\pi_{h'} \in \Pi_{h'}},\] and $\mat{M}_t$ to denote \[\expect_{s_{\pi_{h'}} \sim \dist_{h'}^{\pi_{h}^a}}\left[\phi(s_{\pi_{h'}})\phi(s_{\pi_{h'}})^\top\right],\] then this is exactly the process described in Lemma~\ref{lem:mat_potential}, and thus the upper bound on $|\Pi_{h'}|$ follows.
\end{proof}

\end{document}